\RequirePackage{latexml}
\iflatexml
  \documentclass{amsart}
  \newcommand{\Description}[2][]{}
  \newtheorem{theorem}{Theorem}
\else
  \documentclass[acmsmall, screen,nonacm]{acmart}
  \authorsaddresses{}
  \setcopyright{none}
  \renewcommand\footnotetextcopyrightpermission[1]{}
\fi

\usepackage{subcaption}
\usepackage{amsmath}
\usepackage{xspace}
\usepackage{multicol}
\usepackage{multirow}
\usepackage{booktabs}
\usepackage{cleveref}
\usepackage[super]{nth}
\usepackage{enumitem}
\usepackage{amsmath}
\usepackage{makecell}
\usepackage{textcomp}
\usepackage{textgreek}
\usepackage{pifont}
\usepackage{wrapfig}
\usepackage{listings}
\lstdefinestyle{promptblock}{
    language={},
    basicstyle=\fontfamily{pcr}\selectfont\scriptsize,
    keywordstyle=\fontfamily{pcr}\selectfont\scriptsize,
    commentstyle=\fontfamily{pcr}\selectfont\scriptsize,
    stringstyle=\fontfamily{pcr}\selectfont\scriptsize,
    identifierstyle=\fontfamily{pcr}\selectfont\scriptsize,
    breaklines=true,
    breakatwhitespace=false,
    columns=fullflexible,
    frame=single,
    framesep=3pt,
    framerule=0.3pt,
    rulecolor=\color{black!35},
    backgroundcolor=\color{black!3},
    keepspaces=true,
    showstringspaces=false,
    aboveskip=0.4em,
    belowskip=0.6em,
    xleftmargin=0.5em,
    xrightmargin=0.5em
}

\usepackage{siunitx}
\DeclareSIUnit\dollar{\$}
\DeclareSIUnit{\month}{month}
\DeclareSIUnit{\thousand}{k}
\DeclareSIUnit{\million}{M}

\newcommand{\Wh}[1]{\SI{#1}{{\watt\hour}}}

\usepackage{array}
\newcolumntype{L}[1]{>{\raggedright\let\newline\\\arraybackslash\hspace{0pt}}m{#1}}
\newcolumntype{C}[1]{>{\centering\let\newline\\\arraybackslash\hspace{0pt}}m{#1}}
\newcolumntype{R}[1]{>{\raggedleft\let\newline\\\arraybackslash\hspace{0pt}}m{#1}}

\usepackage{pifont}
\newcommand{\circled}[1]{\ding{\numexpr#1+201}}
\newcommand{\circledtext}[1]{\raisebox{.5pt}{\textcircled{\raisebox{-0.9pt} {\small #1}}}}

\newtheorem*{theorem*}{Theorem}
\theoremstyle{definition}
\newtheorem{assumption}{Assumption}

\usepackage[binary-units=true]{siunitx}
\newcommand{\SIx}[1]{\num{#1}\relax}
\DeclareSIUnit\per{/}
\DeclareSIUnit\dollar{\$}
\DeclareSIUnit{\month}{month}
\DeclareSIUnit{\thousand}{k}
\DeclareSIUnit{\million}{M}

\usepackage{pgfplots}
\usepgfplotslibrary{fillbetween}
\usepackage{tikz}
\usetikzlibrary{matrix,positioning}
\usetikzlibrary{external}
\usepackage{pgfplots}
\usepackage{pgfplotstable}
\usetikzlibrary{pgfplots.groupplots}
\usetikzlibrary{arrows}
\usetikzlibrary{patterns}
\usetikzlibrary{positioning}
\usetikzlibrary{decorations.pathreplacing}
\usetikzlibrary{shapes.arrows}
\usetikzlibrary{shapes.geometric,shapes.misc}
\usetikzlibrary{pgfplots.groupplots}
\pgfplotsset{compat=newest}
\pgfkeys{/pgf/number format/.cd,1000 sep={}}

\pgfplotsset{
    discard if/.style 2 args={
        x filter/.code={
            \edef\tempa{\thisrow{#1}}
            \edef\tempb{#2}
            \ifx\tempa\tempb
                
            \fi
        }
    },
    discard if not/.style 2 args={
        x filter/.code={
            \edef\tempa{\thisrow{#1}}
            \edef\tempb{#2}
            \ifx\tempa\tempb
            \else
                
            \fi
        }
    }
}

\makeatletter
\newcommand\resetstackedplots{%
\pgfplots@stacked@isfirstplottrue
}
\makeatother

\usetikzlibrary{spy}
\usetikzlibrary{patterns,backgrounds}
\pgfdeclarelayer{foreground}
\pgfsetlayers{background,main,foreground}

\usepgfplotslibrary{statistics}

\usepackage[skins]{tcolorbox}

\colorlet{takeawaybg}{black!5!white}
\colorlet{takeawayframe}{black!25!white}

\newcounter{takeawaycounter}
\crefname{takeawaycounter}{Takeaway}{Takeaways}
\Crefname{takeawaycounter}{Takeaway}{Takeaways}

\newlength{\takeawaypad}
\newsavebox{\takeawayboxbox}

\newenvironment{takeawaybox}[2]{%
  \refstepcounter{takeawaycounter}\label{#1}%
  \par\addvspace{0.6\baselineskip}\noindent
  \begingroup
  \setlength{\fboxrule}{1pt}%
  \setlength{\fboxsep}{\takeawaypad}%
  \begin{lrbox}{\takeawayboxbox}%
  \begin{minipage}{\dimexpr\linewidth-2\fboxsep-2\fboxrule\relax}
  \textbf{Takeaway~\arabic{takeawaycounter}:~#2}\par\addvspace{0.25\baselineskip}%
}{%
  \end{minipage}%
  \end{lrbox}%
  \fcolorbox{takeawayframe}{takeawaybg}{\usebox{\takeawayboxbox}}%
  \endgroup
  \par\addvspace{0.6\baselineskip}%
}

\usepackage{soul}
\makeatletter
\def\SOUL@hlpreamble{%
\setul{}{2.2ex}
\let\SOUL@stcolor\SOUL@hlcolor
\SOUL@stpreamble
}
\makeatother
\soulregister\name7
\soulregister\cite7
\soulregister\ref7
\soulregister\Cref7
\soulregister\pageref7
\soulregister\eg7
\soulregister\ie7
\soulregister\texttt7
\soulregister\circled7
\soulregister\encircle7
\soulregister\uline7
\soulregister\circled7
\soulregister\circledtext7
\soulregister\todo7
\soulregister\henry1

\newcommand{\name}{AgentDecarbonizer\xspace}

\newcommand{\eg}{e.g.,\xspace}
\newcommand{\ie}{i.e.,\xspace}

\newcommand{\unitCI}{gCO\textsubscript{2}e/kWh\xspace}
\newcommand{\carbonKG}{kgCO\textsubscript{2}e\xspace}
\newcommand{\carbonG}{gCO\textsubscript{2}e\xspace}

\begin{document}

\title{\name{}: Carbon-Aware Execution for AI~Agents}

\iflatexml
  \author{Leyi Yan}
  \address{University of Waterloo}
  \author{Shuangning Li}
  \address{University of Chicago}
  \author{Sihang Liu}
  \address{University of Waterloo}
\else
  \author{Leyi Yan}
  \affiliation{%
    \institution{University of Waterloo}
    \city{Waterloo}
    \state{Ontario}
    \country{Canada}
  }

  \author{Shuangning Li}
  \affiliation{%
    \institution{University of Chicago}
    \city{Chicago}
    \state{IL}
    \country{USA}
  }

  \author{Sihang Liu}
  \affiliation{%
    \institution{University of Waterloo}
    \city{Waterloo}
    \state{Ontario}
    \country{Canada}
  }
\fi

\begin{abstract}
AI agents extend large language models from single prompt-response interactions to long-running, goal-directed workflows that issue many model calls, invoke tools, and interact with external environments.
These workflows enable tasks such as software repair, data analysis, and experiment management, but their repeated model invocations can incur substantial carbon emissions.
This paper characterizes the carbon emissions of OpenClaw agent workloads using WildClawBench, and shows that emissions depend on token consumption, context cache reuse, and the carbon intensity of the grid.
Our characterization identifies deadline flexibility as an opportunity for carbon-aware execution: agent tasks can wait for lower-carbon-intensity periods or shift to lower-carbon grids.
However, doing so requires handling uncertain execution time for temporal shifting and cached context recomputation during spatial shifting.
We present \name{}, a carbon optimizer for AI agents that runs alongside OpenClaw.
Given a task prompt and user-specified deadline, \name{} conservatively estimates task duration and selects deadline-feasible execution schedules, while accounting for cache recomputation overhead during spatial shifting.
Evaluated on WildClawBench workloads with 60 agent tasks across four grids, \name{} reduces carbon emissions by up to 57.9\,\% compared with a carbon-agnostic baseline and by up to 37.5\,\% compared with a baseline that selects the carbon-optimal grid at task start time.
\end{abstract}

\maketitle

\section{Introduction}

AI agents extend large language models (LLMs) from single prompt-response interactions into autonomous, goal-directed workflows.
Given a high-level task, an agent plans intermediate steps, invokes model APIs and external tools, and refines its execution based on intermediate observations~\cite{yao2023react,toolformer,wang2024survey}.
This execution model enables long-running tasks such as software repair, data analysis, and machine learning experimentation to proceed with limited user intervention~\cite{sweagent,huang2024mlagentbench,data_interpreter}. 
For example, OpenAI recently introduced scheduled tasks which have flexible deadlines~\cite{openaiChatGPTScheduledTasks}. 
Unlike traditional LLM queries, an agent task may expand into many model calls, tool invocations, and environmental interactions.
These agent workloads have been supported by dedicated  platforms~\cite{googleAntigravity,openaiCodex,anthropicClaudeCode} and OpenClaw is a representative open-source example~\cite{openclawintro}.
Agent execution also differs from conventional LLM workloads through repeated reuse of context across model invocations~\cite{yuan2026agenticaiworkloadcharacteristics,webarena,visualwebarena,mckinsey2025stateofai}.
Because agents maintain states such as task goals, conversation history, and tool outputs throughout execution, modern LLM serving systems use context caching to avoid recomputing shared context and reduce inference overhead~\cite{openaiPromptCaching,cachedattention,anthropicPromptCaching,googleGeminiContextCaching,awsbedrockPromptCache}.

While autonomous agents enable complex task processing, their multi-step execution introduces substantial computational demands.
To characterize these workloads, we analyze OpenClaw using WildClawBench~\cite{ding2026wildclawbench}, a benchmark suite of 60 realistic agent tasks across six categories, such as productivity and coding. 
We observe that agent execution is highly diverse, exhibiting significant variations in execution time, API calls, and token consumption.
Moreover, these workloads are input-heavy, with input tokens accounting for 99.1\% of all processed tokens. 
This makes context caching highly critical, as cached input tokens eliminate recomputation for 94.6\% of input tokens.

Their computation-intensive workload characteristics also have sustainability implications. 
Serving large-scale AI models leads to high carbon emissions~\cite{elsworth2025environmentalimpact,ieaenergyai2025,faiz2024llmcarbon}. 
The long-running and intensive execution of autonomous agents further amplifies this impact.
We model the energy consumption of AI agents based on prior studies \cite{jegham2025howhungryai,altman2025gentlesingularity,elsworth2025environmentalimpact} and vendor API pricing \cite{openaiPricing,googleGeminiPricing}, extending existing per-prompt energy measurements into per-token estimates for input, output, and cached input tokens. 
We then convert the estimated energy consumption into carbon emissions using the carbon intensity of the grid (in units of ${\rm \unitCI{}}$)---a factor that varies across locations and over time with the energy generation mix~\cite{souza2024casper,carboncast}.
For example, executing WildClawBench on the CISO grid (California, USA) on Feb 24, 2026 is estimated to emit 6.6~\carbonKG{}, equivalent to driving a passenger vehicle for approximately \SI{25}{\kilo\meter}~\cite{epaTypicalPassengerVehicleGHG}. 
These emissions can become substantial when agents are deployed at scale, motivating carbon emission optimizations for AI agents.


Because carbon intensity varies across time and locations, existing carbon-aware optimization techniques reduce emissions by shifting workloads to lower-carbon periods and regions~\cite{radovanovic2022carbon,guo2023carbon,sukprasert2023limitations,xu2020managing}.
However, these techniques do not directly address the unique execution characteristics of AI agents.
First, agent execution duration is uncertain because later steps may depend on earlier model outputs, tool results, and interactions with external environments. 
Meanwhile, user-specified deadlines must be met rather than indefinitely deferring execution for lower-carbon periods.
Second, reusable context caches are critical for agent execution.
In our characterization, context caching reduces carbon emissions by 80.7\,\%.
Maintaining agent execution in the same location can preserve these cached contexts, which is usually assumed by serving systems \cite{moocacke2024arxiv,googleGeminiContextCachingUse,awsbedrockPromptCache}.
In contrast, shifting execution to another grid can lose cache and require recomputing previously cached inputs, adding extra emissions.
As a result, spatial shifting is not always beneficial: the grid with the lowest carbon intensity may not yield the lowest total emissions once the cache recomputation overhead is considered.

To address these practical challenges, we present \name{}, a carbon optimization tool for autonomous AI agents on OpenClaw.
Given a task prompt and user-specified deadline, \name{} runs alongside OpenClaw to estimate task duration, forecast carbon intensity, and select deadline-feasible execution schedules across time and grids.
To handle uncertain agent execution time, we design an execution time estimator based on a lightweight Gemma~4 4B model~\cite{gemma4qat} that runs on the local device.
It begins with a conservative duration estimate based on the task phases and their complexity, and refines the remaining-duration estimate during execution using the OpenClaw agent's progress, API-call statistics, and observed tool usage.
This allows the agent task to wait for lower-carbon periods when there is enough time and to fall back to immediate execution when the deadline is tight.

To account for the impact of caching during spatial workload shifting, \name{} estimates the recomputation overhead using model token statistics.
It performs grid shifting only when the expected carbon reduction exceeds this cache overhead.
We formulate this cache-aware optimization problem under a deadline constraint and derive a dynamic-programming solution based on the observation that future cache costs depend solely on the previous execution location.

We evaluate \name{} on top of OpenClaw \cite{openclawintro}, a representative open-source agent system, using a lightweight Mac mini to run the local OpenClaw client.
The agent invokes GPT-5.4 as the remote model API, and we run 60 WildClawBench tasks spanning six categories.
While this work targets OpenClaw, the optimization strategies are broadly applicable to other agent systems.
We construct six long-running workloads by composing tasks within the same category. 
The evaluation uses carbon intensity traces from four grids and workload deadlines ranging from 3 hours to 24 hours. 
We compare \name{} against two baselines: a carbon-agnostic baseline that uses the average emissions across all four grids and a current-optimal baseline that selects the grid with the lowest carbon intensity at the workload start time.
The evaluation shows that, with a flexible 24-hour deadline, \name{} reduces carbon emissions by 57.9\,\% over the average baseline and 37.5\,\% over the current-optimal baseline.
Even under a tighter 3-hour deadline, \name{} reduces carbon emissions by 34.9\,\% over the average baseline and 3.3\,\% over the current-optimal baseline.

This paper makes the following contributions:
\begin{itemize}[leftmargin=1em]
    \item To the best of our knowledge, this is the first work to systematically study the carbon emissions of AI agents and optimize carbon using methods tailored to their execution characteristics. 
    
    \item We systematically characterize the carbon emissions of AI agent workloads on OpenClaw, and identify the key factors that affect their emissions. 

    \item We design \name{}, a carbon optimizer for AI agents that estimates agent duration
    and progress, forecasts regional carbon intensity, and selects deadline-feasible low-carbon execution plans with cache-aware grid shifting decisions.


    \item We evaluate \name{} using AI agent workloads from WildClawBench and demonstrate that it always meets workload deadlines while achieving up to 57.9\,\% carbon emission savings compared to a carbon-agnostic baseline under a flexible 24-hour deadline. 
\end{itemize}
\section{Background and Motivation}
\label{sec:background}

In this section, we introduce autonomous AI agents, OpenClaw, and carbon emission accounting.

\subsection{Large Language Models} \label{subsec:llm}

Recent advances in large language models (LLMs) have demonstrated powerful capabilities in natural language understanding, logical reasoning, and code generation.
LLM inference typically consists of two execution phases: \textit{prefill} and \textit{decode}.
During the \textit{prefill} phase, the model processes the entire user prompt and context in a single, parallel forward pass. 
Subsequently, during the \textit{decode phase}, the model generates output tokens autoregressively.
Because of the different execution patterns, the prefill phase is \textit{compute-bound} while the decode phase is \textit{memory-bound}. 
There have been various system-level optimizations that improve the efficiency of AI model serving \cite{pagedattention,orca,agrawal2024sarathi,distserve,patel2024splitwise}.
Notably, one class of techniques employs context caching, which reuses precomputed Key-Value (KV) states for shared prompts across requests, thereby avoiding repeated prefill computation.
This allows future prompts or multi-turn queries with shared or similar context to reuse cached computation~\cite{yu2025smartcache,yan2025contextcache,cachedattention}. 
As LLMs become more capable and widely used, serving them requires substantial computational resources.
As a result, deploying these advanced models is particularly compute-intensive, requiring high-end GPUs or AI accelerators. 

\subsection{Autonomous AI Agents}


Advances in LLMs have enabled autonomous AI agents, which extend conventional chatbot-style prompt-response interactions into goal-directed workflows.
Instead of requiring the user to repeatedly issue instructions and interpret each response, an agent begins with an initial request that specifies the goal, requirements, and expected outcome.
It can plan actions, invoke tools, interact with external environments, validate intermediate results, and retry failed steps before producing the final outcome.
As a result, the user can wait for the completed result without monitoring intermediate steps, which provides greater flexibility in execution scheduling~\cite{openaiCodex,swebench,webarena,ding2026wildclawbench}.
For example, OpenAI's Scheduled Tasks feature allows agent workflows to be executed with flexible schedules and deadlines~\cite{openaiChatGPTScheduledTasks}.
Representative tasks include code repository analysis, large-scale document comprehension, data analysis, and software development.
This shift from user-guided, prompt-by-prompt interactions to goal-directed workflows in realistic task environments is also reflected in recent AI agent studies~\cite{webarena,swebench,agentbench,ding2026wildclawbench,mckinsey2025stateofai,yuan2026agenticaiworkloadcharacteristics}.

\begin{figure}[t]
    \centering
    \includegraphics[width=\linewidth]{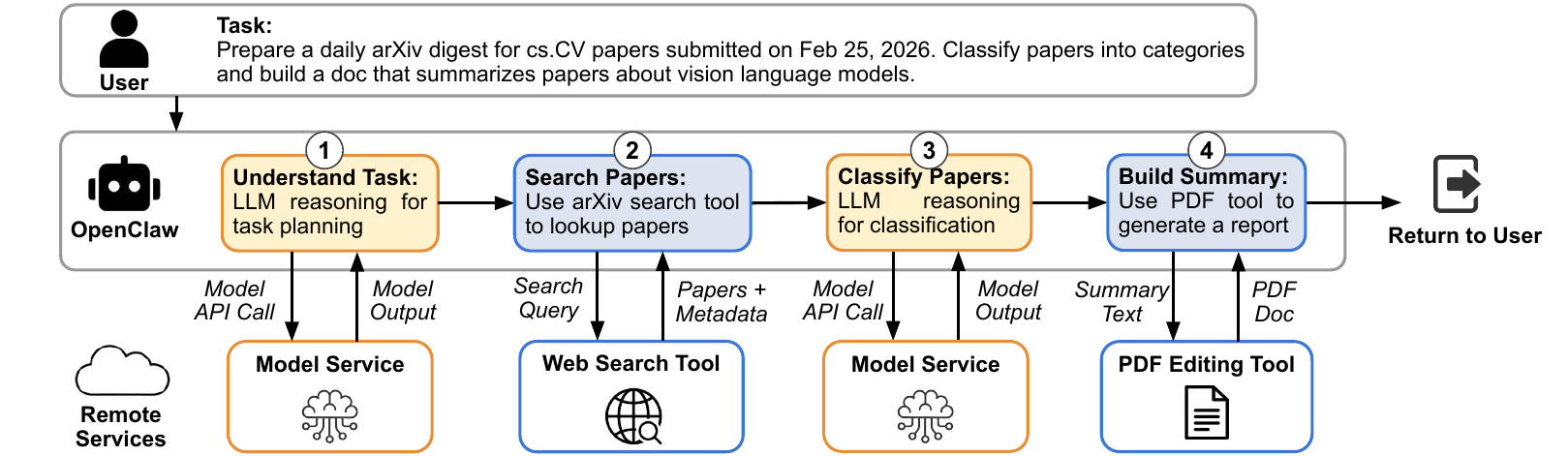}
    \caption{Example of an autonomous AI agent workflow on the OpenClaw platform~\cite{openclawintro}.}
    \Description{}
    \label{fig:agent-workflow}
\end{figure}

\subsection{OpenClaw} \label{subsec:openclaw}

Recently, autonomous agent capabilities have been integrated into systems from major AI service providers, such as Codex~\cite{openaiCodex}, Claude Code~\cite{anthropicClaudeCode}, and Antigravity~\cite{googleAntigravity}.
While these proprietary systems primarily target coding and software development, OpenClaw has emerged as a popular open-source autonomous agent system designed for more general tasks~\cite{openclawintro}.
OpenClaw serves as a workload harness that connects user requests, agent actions, external tools, and backend model APIs.
Through provider-specific adapters, OpenClaw can invoke different model APIs, such as GPT and Gemini, while keeping the agent runtime largely independent of the underlying model service~\cite{openclawModelAgnostic}.
These model APIs can also return request-level metadata, such as input, output, and cached token counts, alongside the generated response~\cite{openaiResponsesUsage}.
Such metadata provides visibility into agent execution behavior and resource usage.
\Cref{fig:agent-workflow} illustrates an OpenClaw workflow adapted from one task in WildClawBench~\cite{ding2026wildclawbench}.
In this example, the task is to summarize papers related to a specific topic.
OpenClaw coordinates planning, remote model reasoning, and tool-based data retrieval and processing before returning the final result.

\subsection{Carbon Accounting in Computing}
\label{subsec:carbon-account-computing}

Carbon emissions of LLM-based systems have already attracted attention from both academia and industry~\cite{faiz2024llmcarbon,Li2025fairpractical,elsworth2025environmentalimpact,ieaenergyai2025}.
Autonomous agent platforms such as OpenClaw can further amplify this impact. Their workflows often involve repeated model invocations and tool calls, increasing token consumption and execution time.
This motivates carbon-aware analysis for agent workloads.

In this section, we introduce the basic carbon accounting background for computing systems.
\Cref{sec:agent_carbon} then focuses on carbon accounting for AI agent systems.
Carbon accounting estimates the greenhouse gas emissions associated with completing a computing task.
Emissions from different sources are commonly translated into a shared unit, CO\textsubscript{2} equivalent (\carbonG{}), which allows their climate impact to be compared consistently.
In computing systems, carbon emissions are often divided into operational and embodied components.
Operational carbon captures the runtime emissions caused by electricity consumed during workload execution:
\begin{equation}
    C_{o} = E \times {\rm CI},
    \label{eq:operational-carbon}
\end{equation}
where $E$ denotes the workload energy consumption and ${\rm CI}$ denotes the average carbon intensity of the electricity supplied by the grid. When expressed in \unit{kWh} and \unitCI{}, respectively, $C_o$ is measured in \carbonG{}.
This model is commonly used for carbon accounting in computing systems~\cite{ACT,faiz2024llmcarbon,greenLLM,Li2025fairpractical}.

Computing systems also incur embodied carbon from the manufacturing processes of their components, such as CPUs, GPUs, memory, and storage.
Lifecycle accounting frameworks often amortize these emissions over the hardware lifetime and allocate a fraction to each workload based on execution time or resource usage~\cite{ACT}.
Embodied carbon is therefore important for full lifecycle assessments of AI infrastructure.
This paper targets carbon optimization through runtime execution decisions, such as when and where remote model API calls are executed.
These decisions directly affect operational carbon through workload energy consumption and grid carbon intensity, but they do not directly change the manufacturing emissions of the underlying hardware. 
Therefore, this work focuses on optimizing operational carbon.

\begin{figure}[t]
    \centering
    \begin{minipage}[t]{0.56\linewidth}
        \vspace*{0pt}
\captionof{table}{Average carbon intensity (CI, \unitCI{}) and major energy sources of different grids in 2025 \cite{ele_maps}.} \label{tab:grid_ci}
\small
\centering
\setlength{\tabcolsep}{3pt}
\begin{tabular}{llcl}
\toprule
Grid & Region & Avg CI & Major Energy Sources \\
\midrule
AT & Austria & 164 & Hydro, wind, gas, solar \\
CISO & California, USA & 208 & Solar, gas, wind, hydro \\
FI & Finland & 68 & Nuclear, wind, hydro \\
GB & Great Britain & 174 & Wind, gas, nuclear \\

\bottomrule
\end{tabular}


    \end{minipage}
    \hfill
    \begin{minipage}[t]{0.4\linewidth}
        \vspace*{0pt}
        \centering
        \includegraphics[width=\linewidth]{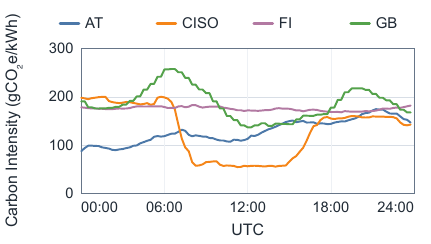}
        \caption{Carbon intensity on Feb 24, 2026.}
        \label{fig:ci_traces}
        \Description{}
    \end{minipage}
\end{figure}

A key factor of operational carbon is carbon intensity, which links energy use to carbon emissions.
\Cref{tab:grid_ci} lists four grids and their average carbon intensity in 2025 using data from Electricity Maps~\cite{ele_maps}.
Carbon intensity varies across regions because electricity grids rely on different energy generation mixes. Grids with larger shares of renewable energy, such as wind, solar, and hydro, generally have lower carbon intensity than grids that depend heavily on fossil fuels. Carbon intensity also varies over time within the same region as the generation mix changes. For example, increased solar generation during the day reduces carbon intensity.
\Cref{fig:ci_traces} shows the dynamics of carbon intensity for each grid in a day.
Because of the dynamics of carbon intensity, carbon-aware systems can reduce emissions by shifting computation across time or regions~\cite{souza2024casper,Caribou,zhang2026carbonawarecompute,gaia,CarbonExplorer,li2025ecoserve}.

\section{Carbon Emissions of AI Agents} \label{sec:agent_carbon}

In this section, we describe a carbon accounting method for AI agents and then provide estimates of per-token carbon emissions.

\subsection{Carbon Accounting Method for AI Agents}
\label{subsec:agent_carbon_accounting}

As introduced in \Cref{subsec:openclaw}, AI agent systems like OpenClaw split execution between a local user device and remote model APIs.
The local device provides the user interface and runs lightweight control logic, while computationally intensive model inference is delegated to the remote API service.
This execution model introduces two sources of operational carbon: local execution on the user device and remote execution by the model API service.
For the workloads considered in this paper, the local component is lightweight compared with remote LLM inference~\cite{sfailabsOpenClawHardware} and largely independent of the remote model API.
Therefore, local operational carbon acts mainly as a common background component, while remote API operational carbon is the major contributor to carbon emissions and therefore the primary optimization target.

Following the accounting scope defined in \Cref{subsec:carbon-account-computing}, we focus on the operational carbon $C$ of the remote model service.
We refer to each time block of length $\Delta t$ as an interval, and use $t$ to denote its start time.
For a task whose remote API calls may be served by different grids over its execution period, we calculate
\begin{equation}
    C = \sum_r \sum_t E_{r,t} \times CI_{r,t},
    \label{eq:time-grid-carbon}
\end{equation}
where $r$ refers to the grid, $E_{r,t}$ is the energy consumed by remote API execution served by grid $r$ during interval $[t,t+\Delta t)$, and $CI_{r,t}$ is the corresponding average carbon intensity.
This time- and grid-dependent accounting method captures the operational carbon associated with serving the remote model of an AI agent task.
This accounting formulation forms the basis of our optimization problem, which uses the same interval convention.

\subsection{Energy Consumption Estimate} \label{subsec:agent-energy}

As described in \Cref{eq:time-grid-carbon}, energy consumption and real-time carbon intensity are the two primary factors in carbon emission calculation. 
Carbon intensity has been widely studied, and databases are available for querying real-time carbon intensity across grids~\cite{watttime,eiaGridMonitor,entsoeTransparency,ele_maps}.
Therefore, the main challenge is to estimate the energy consumption of AI agents.

We begin with public data from prior studies and blog posts that have estimated the carbon emissions of serving AI models \cite{elsworth2025environmentalimpact,altman2025gentlesingularity,jegham2025howhungryai}. 
These studies span both lightweight chatbot applications and complex tasks such as reasoning.
For example, Sam Altman's blog mentions that a ChatGPT query consumes \Wh{0.34} of energy on average \cite{altman2025gentlesingularity} and Google's study shows that the median energy consumption of a Gemini application is \Wh{0.24} \cite{elsworth2025environmentalimpact}. 
As AI agent tasks typically use high-capability models and process long contexts, we focus on estimating energy consumption in such scenarios, based on a recent study of the energy consumption of the OpenAI API~\cite{jegham2025howhungryai}.
Specifically, their long, high-reasoning GPT-5 configuration assumes \SIx{10000} input tokens and \SIx{1500} output tokens, with a reported energy consumption of \Wh{33.8} per prompt. 
Since this study reports energy only at the prompt level, we use a simple effective per-token model:
\begin{equation}
    E_{\mathrm{prompt}} = e_{\mathrm{in}}N_{\mathrm{in}} + e_{\mathrm{out}}N_{\mathrm{out}},
\end{equation}
where $e_{\mathrm{in}}$ and $e_{\mathrm{out}}$ are the per-token input and output energy, and $N_{\mathrm{in}}$ and $N_{\mathrm{out}}$ are the corresponding numbers of input and output tokens. 

Based on typical metadata for token usage reported by AI service APIs, such as OpenAI's response usage fields for input, output, and cached tokens~\cite{openaiResponsesUsage,openaiPromptCaching}, we divide tokens into three categories: \textit{output tokens}, \textit{input tokens}, and \textit{cached input tokens}, 
where cached input tokens benefit from context caching as introduced in \Cref{subsec:llm}. 

Since the prior study focuses on individual prompts without caching, we first split the reported energy between input and output tokens. 
API pricing typically reflects the computational cost of model serving. Therefore, motivated by OpenAI's API pricing, where output tokens are priced at 6~times the input token price for GPT-5-class models~\cite{openaiPricing}, we assume that output tokens consume 6~times the energy of input tokens, \ie{} $e_{\mathrm{out}} = 6e_{\mathrm{in}}$.
This yields 
\begin{equation}
\begin{aligned}
e_{\mathrm{in}}  = 0.00178\ \mathrm{Wh/token}, \quad
e_{\mathrm{out}} = 0.01067\ \mathrm{Wh/token}.
\end{aligned}
\end{equation}
For cached input tokens, we again use their price relative to regular input tokens, \ie{} 10\,\% of the regular input token price~\cite{openaiPricing}, leading to
\begin{equation}
    e_{\mathrm{in,cache}}=0.000178\ \mathrm{Wh/token}.
\end{equation}

\begin{table}[t]
    \centering
    \caption{WildClawBench tasks and categories \cite{ding2026wildclawbench}. }
    \label{tab:wildclawbench-openclaw-categories}
    \small
    \begin{tabular}{lcl}
        \toprule
        Category & \# Tasks & Example tasks \\
        \midrule
        Productivity Flow  & 10 & Paper digest, PDF analysis \\
        Code Intelligence  & 12 & Website development, code debugging \\
        Social Interaction & 6 & Chat extraction, calendar scheduling \\
        Search \& Retrieval & 11 & Data collection, fuzzy search \\
        Creative Synthesis & 11 & Slides generation, posters generation \\
        Safety Alignment & 10 & Skill injection filtering, password leak detection \\
        \bottomrule
    \end{tabular}
\end{table}

\section{Characterization of AI Agent Workloads } \label{sec:characterization}

Although prior studies have characterized various AI agent workloads \cite{agentbench,webarena,swebench,zhao2026computer,yuan2026agenticaiworkloadcharacteristics,chung2026agentx,bai2026agentsmoney}, their carbon emissions have not been well studied. 
Therefore, we characterize OpenClaw workloads from a carbon emission perspective and summarize the key workload properties that motivate carbon-aware optimizations as takeaways.



\subsection{Characterization Methodology}
\label{sec:characterization-methodology}

We use OpenClaw with GPT-5.4 model API to run representative AI agent tasks and collect their execution statistics for characterization.
The workloads are based on WildClawBench \cite{ding2026wildclawbench}, a benchmark suite that contains a total of 60 AI agent tasks spanning 6 categories: productivity flow, code intelligence, social interaction, search and retrieval, creative synthesis, and safety alignment.
\Cref{tab:wildclawbench-openclaw-categories} lists the number of tasks in each category together with a few representative tasks. 
The carbon intensity data covers the four grids in \Cref{tab:grid_ci} with a granularity of \SI{15}{\minute}.
Note that the characterization focuses on analyzing the agent workload behaviors; detailed evaluation methodologies are discussed in \Cref{subsec:methodology}.

\begin{figure}
\centering
\begin{minipage}[t]{0.49\linewidth}
    \centering
    \includegraphics[width=1\linewidth]{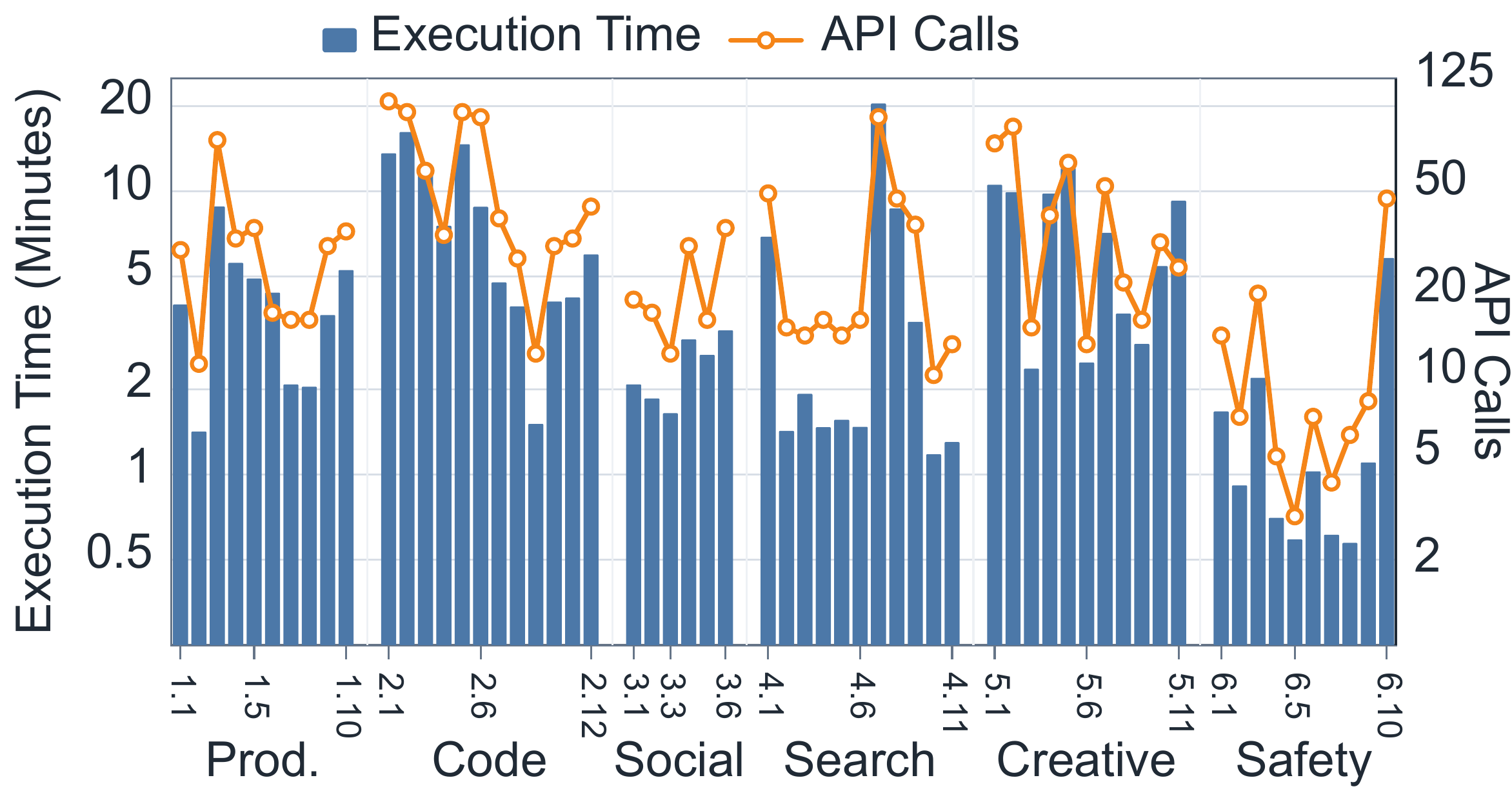}
    \caption{Execution time and number of API calls (log scale) for WildClawBench tasks.}
    \Description{}
    \label{fig:task_time}
\end{minipage}
\hfill
\begin{minipage}[t]{0.49\linewidth}
    \centering
    \includegraphics[width=1\textwidth]{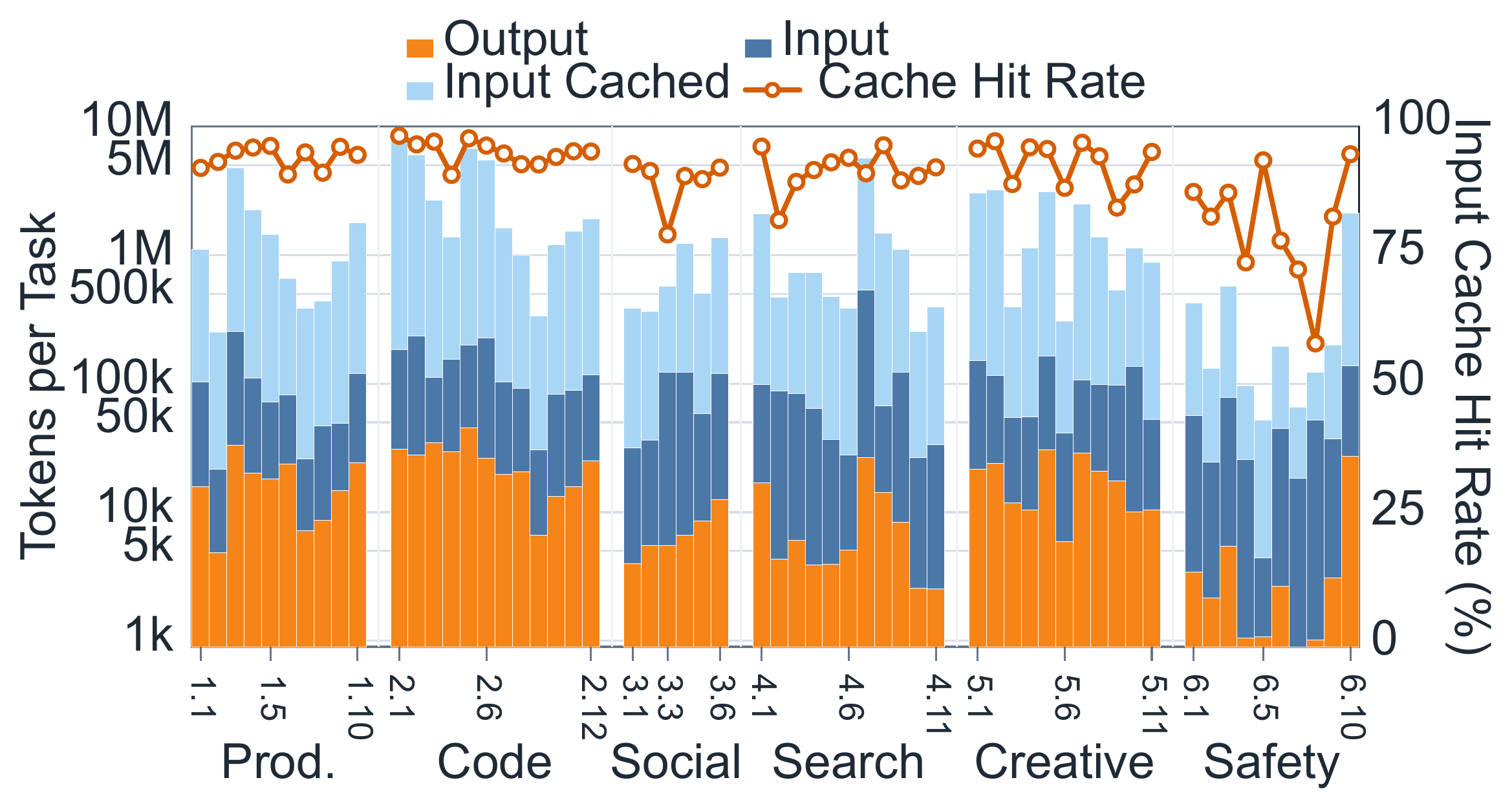}
    \caption{Token consumption (log scale) and cache hit rates of WildClawBench tasks.}
    \label{fig:task_token}
    \Description{}
\end{minipage}
\end{figure}

\subsection{Performance Characterization}
\label{subsec:performance_characterization}

We first evaluate the execution time of agent tasks.
\Cref{fig:task_time} plots the execution time of all 60 tasks across six categories on the left y-axis using a log scale.
Each task is labeled with a unique task ID on the x-axis.
We observe that agent tasks differ substantially in execution time.
Unlike a single user prompt, an agent task may expand into a variable sequence of model API calls, tool invocations, observations, retries, and validation steps.
As a result, more complex and challenging tasks generally take longer to complete.
For example, tasks in the code intelligence category are often more complex because they involve repeated testing and validation.

To better understand how OpenClaw interacts with the model API, we also plot the number of API calls on the right y-axis using a log scale.
Different tasks require substantially different numbers of API calls.
For example, some simple tasks in safety alignment, such as tasks 6.4--6.9, require no more than 8 API calls, while others require up to 103 calls (task 2.1 in code intelligence).
Despite this variation, execution time is largely consistent with the number of API calls.

We next analyze the number of tokens consumed by the agents.
\Cref{fig:task_token} presents the input, output, and cached input tokens for all tasks (left y-axis on a log scale) and cache hit rates (right y-axis on a linear scale).
We observe that agent tasks are generally input-heavy (\ie{} prefill-heavy): input tokens dominate the total token count across tasks (99.1\,\%), while output tokens account for a much smaller fraction (0.9\,\%).
A large portion of these input tokens (94.6\,\%) are cached input tokens, indicating frequent context reuse across repeated model API calls in agent workflows.
This observation is also consistent with recent studies on AI agents~\cite{zhao2026computer,yuan2026agenticaiworkloadcharacteristics,chung2026agentx}.
Overall, token consumption is largely consistent with execution time and the number of API calls.
However, it still varies substantially across tasks, ranging from 52k (task 6.5 in safety alignment) to 8.4M (task 2.1 in code intelligence) tokens, indicating that agent tasks impose highly diverse demands on model API services.

\begin{takeawaybox}{takeaway:agent-resource}{AI agent resource consumption varies}
Agent tasks vary widely in execution time, model API calls, and token consumption.
Because these tasks are usually input-heavy, most input tokens hit the context cache.
Carbon-aware optimizations need to account for execution time in order to meet user-specified deadlines.
\end{takeawaybox}

\subsection{Carbon Emission Characterization}
\label{subsec:emission-characterization}

\begin{figure}
\begin{minipage}[t]{0.49\linewidth}
    \centering
    \includegraphics[width=\linewidth]{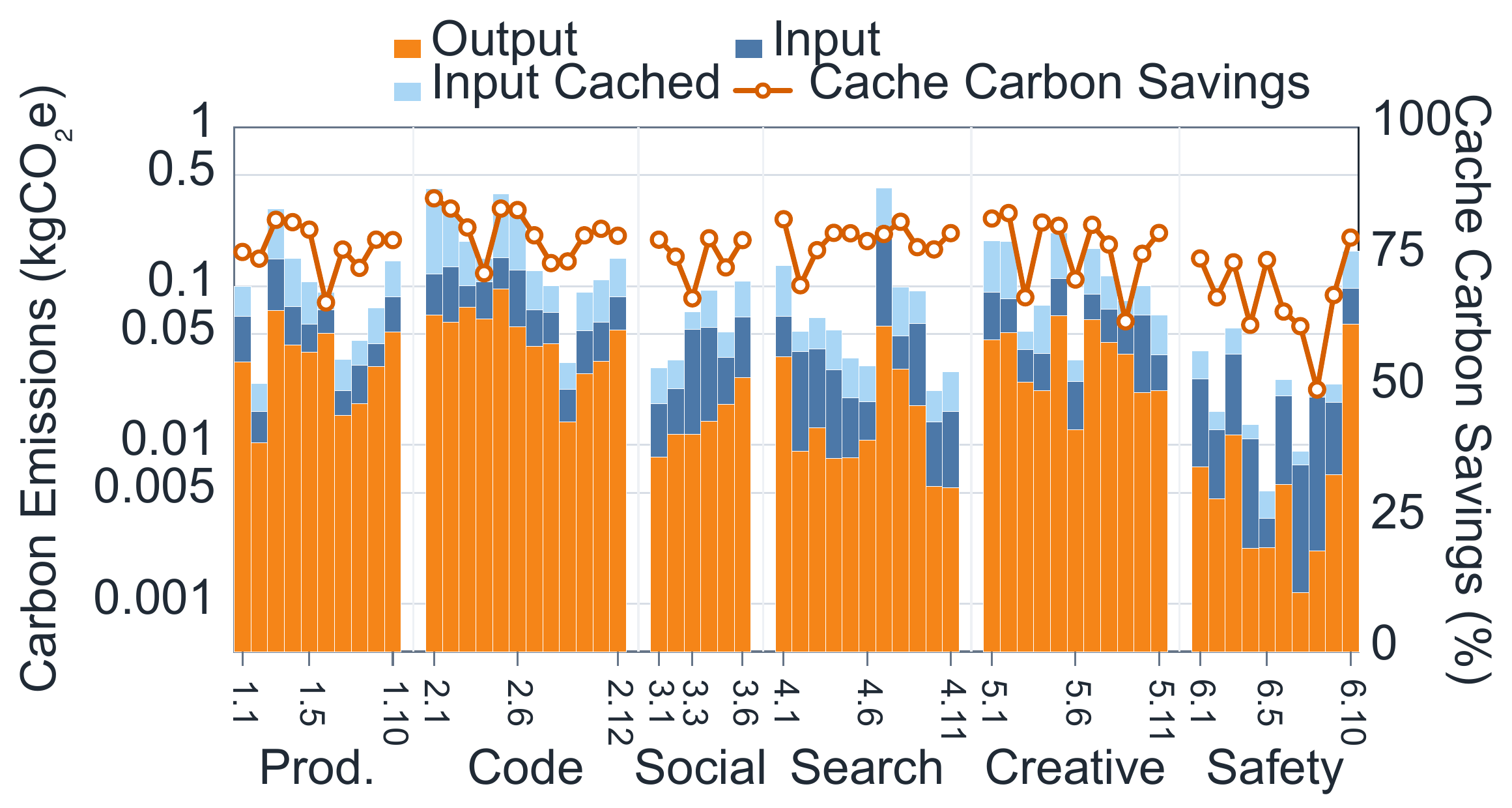}
    \caption{Carbon emissions (log scale) and savings from cache (linear scale) of WildClawBench tasks. Tasks start in CISO at 00:00 UTC on Feb 24, 2026.}
    \label{fig:task_carbon}
    \Description{}
\end{minipage}
\hfill
\begin{minipage}[t]{0.49\linewidth}
    \centering
    \includegraphics[width=\linewidth]{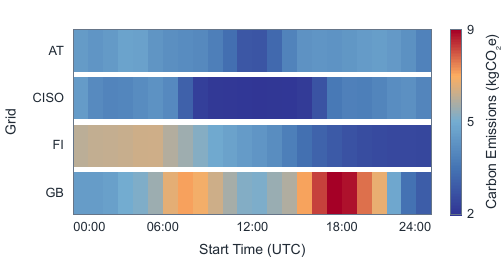}
    \caption{Carbon emissions from executing all WildClawBench tasks across regions and start times of Feb 24, 2026.}
    \label{fig:ci_emission_heatmap}
\end{minipage}
\end{figure}

We further study the carbon emissions of AI agent workloads using the estimated per-token energy consumption described in \Cref{subsec:agent-energy}. 
\Cref{fig:task_carbon} shows the corresponding carbon emissions from input, output, and cached input tokens for all tasks, using the left y-axis on a log scale. 
Because carbon intensity is grid- and time-dependent, we assume that the model is served in the CISO grid on Feb 24, 2026, and that each task starts at 00:00 UTC. 
Although output tokens have higher per-token carbon emissions, input tokens dominate the overall emissions because these agent tasks are input-heavy. 
On average, input tokens, including cached input tokens, contribute 72.8\,\% of carbon emissions, while output tokens contribute 27.2\,\%.
The total carbon emissions from executing the full WildClawBench are 6.6~\carbonKG{}.
At scale, these emissions are substantial.

\begin{takeawaybox}{takeaway:agent-carbon}{AI agents lead to high carbon emissions}
Agent tasks lead to significant carbon emissions. 
As agent tasks are mainly input-heavy, input tokens account for the majority of carbon emissions.
\end{takeawaybox}

\Cref{subsec:carbon-account-computing} shows that carbon emissions depend on both grid energy mix and execution time. 
Therefore, we study carbon emissions when serving AI models across different grids and at different times. 
\Cref{fig:ci_traces} shows 24-hour carbon intensity traces for four grids on Feb 24, 2026, whose curves interleave throughout the day. 
Each grid follows a different carbon intensity trend because of differences in its energy mix. 
Accordingly, \Cref{fig:ci_emission_heatmap} presents a heatmap of the corresponding carbon emissions from executing the full WildClawBench at different times of day across the four grids.
Because the carbon intensity curves interleave, the carbon-optimal grid for executing the AI agent workload changes over time.


\begin{takeawaybox}{takeaway:agent-carbon-time-grid}{Location and time matter}
Despite the high carbon emissions, the emission numbers are highly dependent on the grid where the AI model is deployed and the time when the task starts.
This implies potential carbon mitigation opportunities through geographic and temporal shifting of workloads, subject to task deadlines.
\end{takeawaybox}

Performance characterization in \Cref{subsec:performance_characterization} shows that a large fraction of input tokens benefit from caching.
\Cref{fig:task_carbon} shows the fraction of carbon emissions saved by caching on the right y-axis on a linear scale.
Because the carbon-savings percentage is normalized, it does not depend on the grid or execution time.
On average, context caching saves 80.7\,\% of the total carbon emissions, which is consistent with the high input token cache hit rate of 94.6\,\%. 

\begin{takeawaybox}{takeaway:cache}{Context caching is critical}
Context caching is critical to reducing carbon emissions, especially for input-heavy agent tasks.
Carbon emission optimizations should therefore consider the impact of caching.
\end{takeawaybox}

\section{High-Level Ideas}
\label{sec:high-level}

In this work, we introduce \name{}, a carbon optimizer for AI agent tasks on OpenClaw.
Prior work on carbon-aware optimizations commonly reduces emissions by delaying workloads to lower-carbon-intensity periods and/or by shifting them across locations~\cite{wiesner2021wait,souza2024casper,Caribou,CarbonExplorer,clover,gaia,li2025ecoserve}.
However, applying carbon-aware workload shifting to autonomous agents introduces two key challenges: (1)~agent workloads need to meet user-specified deadlines while deferring execution, and (2)~spatial shifting must balance cache overhead against the lower carbon intensity of other grids.


\subsection{Deadline and Progress Awareness}

\textbf{Challenge.}
Temporal shifting can reduce emissions by deferring agent workloads to low-carbon-intensity periods. However, the key challenge is determining whether deferred execution can still finish before the user-specified deadline.
Such estimates are difficult because agent execution is iterative and highly dynamic.
Each model API call or tool query depends on previous results and may change upon planning decisions, tool calls, external observations, and retries, causing substantial variation in execution time and token usage~\cite{yao2023react,mitchell2019realtimeplanning,xu-etal-2026-tps,bai2026agentsmoney}.


\noindent\textbf{Solution.}
Agent executions are typically divided into steps, where each step involves remote model API calls, tool usage, and other interactions with the environment~\cite{yao2023react,agentbench}.
Based on this property, \name{} first generates a conservative initial duration estimate before execution begins using the user's prompt and factors such as expected steps, step complexity, and model generation speed.
As execution proceeds, \name{} refines this estimate in flight by collecting progress observations after each step.
These observations reduce uncertainty in the remaining workload and allow \name{} to adapt future execution intervals based on the agent's progress and carbon intensity forecasts.
In this work, an execution interval is defined as a slice of the agent's computational workload, which we match with the carbon intensity granularity (\ie{} $\Delta t$ in \Cref{subsec:agent_carbon_accounting}) when selecting low-carbon execution times.

\begin{figure}
  \centering
  \includegraphics[width=\linewidth]{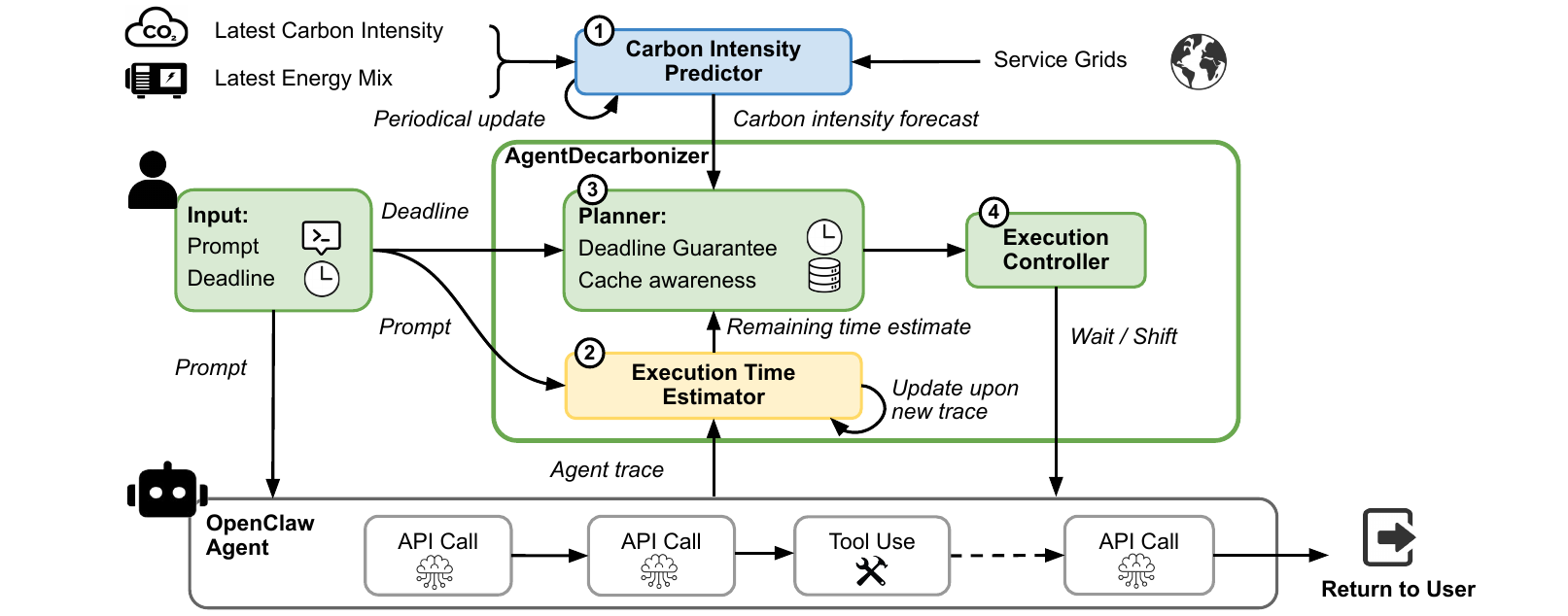}
  \caption{\name{} overview.}
  \Description{AgentDecarbonizer workflow from request submission to cache-aware shifting decisions and execution.}
  \label{fig:greenclaw-overview}
\end{figure}

\subsection{Context Cache Awareness}

\textbf{Challenge.}
Context caching reduces repeated computation by reusing previously processed prompt states, which is especially beneficial for long-context and reasoning-heavy requests.
However, these cached states can be large as they grow with model size and context length.
Therefore, serving systems typically keep caches local to a server or cluster rather than sharing them across regions~\cite{gim2024promptcache,cachedattention,hcache}; examples include Gemini context caching~\cite{googleGeminiContextCachingUse}, AWS Bedrock prompt caching~\cite{awsbedrockPromptCache}, and MoonCake~\cite{moocacke2024arxiv}.
As a result, spatially shifting execution to another grid can introduce cache misses and extra computation, potentially increasing carbon emissions even when the destination grid has lower carbon intensity.

\noindent\textbf{Solution.}
\name{} makes cache effects explicit in the planning process rather than assuming that every lower-carbon-intensity grid is beneficial.
The planner evaluates each candidate interval with a cache-aware carbon model: continuing in the same grid preserves reusable input context, while switching to another grid adds a cache miss penalty for recomputing previously processed context.
\name{} estimates this cache miss penalty using runtime statistics collected from past model API calls, such as the token consumption and cached input token metadata reported by the model API.
For each location shift decision, the planner compares the estimated carbon emissions of continuing in the current grid with the overhead of moving to a different grid due to cache misses.
A spatial workload shift is selected only when the estimated carbon reduction exceeds the recomputation overhead; otherwise, the planner continues execution or waits in the current grid.



\section{\name{}}
\label{sec:design}

We first present an overview of \name{} and then describe its components in detail.

\subsection{System Overview}
\label{sec:design-overview}

\name{} is a carbon optimizer that works on top of OpenClaw.
It jointly considers temporal shifting, deciding which intervals to use for execution or waiting, and spatial shifting, deciding which service grid should handle each selected execution interval, while preserving the original OpenClaw agent execution.
Figure~\ref{fig:greenclaw-overview} provides an overview of \name{}, which runs alongside the OpenClaw agent and periodically updates the execution plan.
The \textit{Carbon Intensity Predictor} forecasts future carbon intensity for each grid using recent carbon intensity and energy mix measurements (step~1).
The \textit{Execution Time Estimator} produces an initial, conservative execution time estimate from the user prompt and refines the remaining time estimate from OpenClaw traces as API calls and tool invocations complete (step~2).
The \textit{Planner} combines the deadline, carbon intensity forecasts, remaining time estimate, and cache miss cost model to choose low-carbon-intensity candidate intervals and service grids, including cache-aware shifting decisions (step~3).
The \textit{Execution Controller} applies the resulting plan by waiting for selected intervals or shifting execution to the grid specified by the planner, while collecting progress and cache statistics that trigger later replanning (step~4).

\subsection{Execution Time Estimator}
\label{sec:design-duration-estimator}

The estimator is built around a lightweight local LLM, a 4B Gemma~4 model in our implementation (see \Cref{subsec:methodology}), that analyzes user requests, extracts task characteristics, and derives time estimates, without actually executing the task.
Figure~\ref{fig:task-duration-estimator} presents the key steps.
As introduced in \Cref{sec:design-overview}, the estimator first generates an initial estimate using the input prompt, and then refines the estimate during execution by collecting the latest execution statistics from OpenClaw.
We next describe the initial estimation pass and then highlight the in-flight estimate update process.

\begin{figure}
  \centering
  \includegraphics[width=\linewidth]{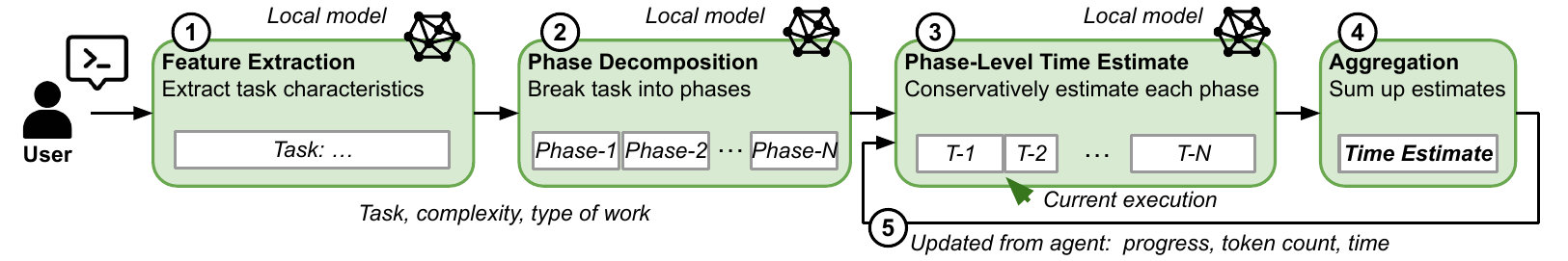}
  \caption{Execution time estimator workflow.}
  \Description{Estimator workflow from input preparation to feature extraction, execution plan decomposition, step-level duration estimation, aggregation, and runtime updates.}
  \label{fig:task-duration-estimator}
\end{figure}

\textbf{Estimator Workflow.}
\textbf{Step~1: Feature extraction.}
Motivated by prior characterizations of LLM and AI agent workloads, which identify factors such as context length, token generation, execution steps, and tool interactions as key runtime contributors~\cite{zhu2026tracelab,sweagent,openSWETraces}, the estimator extracts runtime-relevant task features from the original task input, metadata, and local signals. The exact features and prompt template are provided in Appendix~\ref{app:duration-estimator-step1-prompt}.
\textbf{Step~2: Phase decomposition.}
The local LLM uses the extracted task features to split the task into execution phases. This decomposition exposes the primary runtime contributors and uncertainty sources, enabling phase-level duration estimation.
Appendix~\ref{app:duration-estimator-step2-prompt} provides the corresponding prompt template for this phase decomposition step.
\textbf{Step~3: Phase-level time estimation.}
For each phase, the local LLM takes the phase description and runtime-relevant task features as input and predicts a conservative duration estimate $\widetilde{t}_i$. 
We aim to provide a generic method rather than one tailored to the WildClawBench workloads. Therefore, the prediction is guided by heuristic anchors from publicly available studies, including the expected step count ranges summarized in Appendix~\ref{app:duration-estimator-calibration}, for model calls, tool use, generation, validation, latency, and retries.
This is motivated by prior AI agent work, where tasks unfold as multi-step reasoning and tool-use trajectories~\cite{yao2023react,zhu2026tracelab}.
\textbf{Step~4: Aggregation.}
Finally, the estimator sums the conservative phase estimates to obtain the task-level duration $\widetilde{T}=\sum_i \widetilde{t}_i$, which is the execution time reserved before the deadline.

\textbf{Runtime progress update.}
Because agent execution is multi-step and highly dynamic, the initial estimate can become inaccurate as execution progresses.
Therefore, the estimator periodically updates the remaining duration based on observed execution progress.
It compares the original execution plan with runtime signals, including completed phases, executed tool calls, generated tokens, and recent API and message history, to estimate the fraction of planned work completed.
The remaining duration is then adjusted based on the observed execution speed.

\subsection{Cache-Aware Carbon Model}
\label{sec:design-cache-model}
During planning, \name{} models context cache as a location-dependent shifting penalty.
If execution stays in the same grid, the plan assumes reusable cached input context remains available; if execution shifts to another grid, \name{} uses token consumption and cached-token metadata to estimate the extra computation needed to rebuild the context.
This model reflects the LLM inference workflow: during prefill, an LLM processes input tokens and constructs reusable KV-cache state~\cite{pagedattention,agrawal2024sarathi}.
When shifting to a different grid makes the state unavailable, \name{} approximates the cache miss overhead as the prefill work needed to reprocess the previously cached input tokens.
Note that \name{} is a flexible framework which allows for other cache carbon models (discussed in \Cref{sec:related}).


\subsection{Carbon-Aware Planner}
\label{sec:design-shifting-planner}
The planner converts duration estimates, carbon intensity forecasts, and cache-aware carbon emissions into a placement plan.
By default, \name{} uses EnsembleCI~\cite{yan2025ensembleci} as the local carbon intensity predictor to forecast at a granularity of $\Delta t$, which is set to \SI{15}{\minute} in our implementation, and refreshes the forecasts once per hour during execution.
Like the \textit{Execution Time Estimator} in \Cref{sec:design-duration-estimator}, the planner also operates in two stages: \textit{initial planning} before execution and \textit{in-flight updates} while the agent is running.

\textbf{Initial planning.}
Upon receiving the task, \name{} obtains carbon intensity forecasts for all candidate grids over the user-specified deadline window.
Given the initial duration estimate $\widetilde{T}$ from the \textit{Execution Time Estimator}, the planner converts the required execution time into $K=\lceil\widetilde{T}/\Delta t\rceil$ execution intervals of length $\Delta t$ and selects the $K$ lowest-carbon, non-overlapping candidate intervals before the deadline.
Because no execution history exists yet, the initial plan does not include spatial shifting penalties.

\textbf{In-flight update.}
During execution, each planning refresh updates the remaining placement plan. 
The planner takes updated carbon intensity forecasts, triggers the \textit{Execution Time Estimator} to update the remaining duration from observed progress, and consumes runtime cache signals collected by the controller, such as input and cached token statistics.
It then reoptimizes the placement of the unfinished execution over the remaining deadline horizon.
In this optimization, the planner uses the cache-aware carbon model in Section~\ref{sec:design-cache-model} to balance the lower execution emissions available in other grids against the cache-recomputation carbon emissions caused by shifting away from the current grid.
The updated plan records the resulting temporal and spatial workload shifting decisions.

At either stage, if the time remaining before the deadline is shorter than the estimated execution time, the planner falls back to the default OpenClaw execution path.
Otherwise, it sends the resulting placement plan to the \textit{Execution Controller}.

\subsection{Execution Controller}
\label{sec:design-execution-controller}

The execution controller manages \name{}'s runtime flow according to the execution plan from the \textit{Carbon-Aware Planner}.
It starts execution in the selected candidate interval and grid, pauses between execution intervals when the plan calls for waiting, resumes from the saved execution state, and performs location shifting when the planner selects a different region.
Beyond controlling the execution, the controller leverages OpenClaw's default logging to collect runtime records, including agent messages, detailed token consumption (input, output, and cached input), and tool usage.
When an execution interval ends, the controller extracts these records, sends them to the \textit{Planner}, and waits for the updated plan before starting or resuming the next interval.

Currently, we implement \name{}'s spatial-shifting mechanism in a carbon-emissions simulator that assumes remote model requests can be routed to multiple service regions.
The main model we evaluate, GPT-5.4, does not provide a user-accessible API to choose among service locations; this feature is only available as a data residency option \cite{openaiAPIDataResidency}.
However, location shifting is a realistic capability because some cloud providers already expose location controls to users.
For example, Google Cloud supports location-specific model serving~\cite{googleVertexAILocations}.

\section{Theoretical Analysis}
\label{sec:theory}

This section formalizes the mechanisms described in \Cref{sec:design-shifting-planner} and \Cref{sec:design-execution-controller}.
We first formalize \name{}'s placement objective.
We then show that when cache recomputation depends only on whether the service region changes between consecutive execution intervals, this objective admits an exact dynamic-programming algorithm.
Finally, we prove that the resulting schedule meets the deadline whenever the conservative duration estimate covers the task's actual accumulated runtime.

\subsection{Carbon-Aware Execution Placement}
\label{sec:theory-problem}

A request arrives at time $a$ with deadline $d$ and candidate execution regions $\mathcal{R}$. As described in \Cref{sec:design-shifting-planner}, the carbon-intensity predictor produces forecasts at granularity $\Delta t$, which is set to \SI{15}{\minute} in our implementation. The planner uses the same granularity and treats each forecast time slot as a candidate execution interval. Specifically, it forms candidate intervals $I_t=[t,t+\Delta t)$ whose starting timestamps are $t\in\mathcal{H}=\{a+j\Delta t\mid j=0,1,\ldots,H-1\}$, where $H=\left\lfloor(d-a)/\Delta t\right\rfloor$. During each interval, \name{} may execute part of the agent task in one candidate region, or leave the interval idle and resume execution later. By construction, every candidate interval ends no later than the deadline.

Given the conservative duration estimate $\widetilde{T}$ from the \textit{Execution Time Estimator}, \name{} decomposes the agent task into $K=\lceil\widetilde{T}/\Delta t\rceil$ ordered execution intervals, each of length $\Delta t$, indexed by $k\in\{1,\ldots,K\}$.
These execution intervals are the consecutive chunks of work that must be allocated to candidate intervals.
\name{} preserves the original execution order of the task, but may defer execution mid-task and resume later for lower carbon emissions.

A placement plan is a sequence $\pi=\{(r_k,t_k)\}_{k=1}^{K}$, where $r_k\in\mathcal{R}$ denotes the region selected for execution interval $k$, and $t_k\in\mathcal{H}$ denotes the starting timestamp of the candidate interval in which it is placed.
Feasibility requires chronological order, $t_1<t_2<\cdots<t_K$.
If no feasible placement exists, \name{} falls back to the default OpenClaw execution.

Let $\widehat{E}^{(k)}_{r,t}$ be the predicted energy demand of running execution interval $k$ in region $r$ during interval $[t,t+\Delta t)$, and let $\widehat{CI}_{r,t}$ be the corresponding carbon intensity forecast.
The predicted carbon-emission term is $\widehat{E}^{(k)}_{r,t}\widehat{CI}_{r,t}$, matching the carbon accounting model in \Cref{eq:time-grid-carbon}.
Let $\pi_{1:k}=\{(r_i,t_i)\}_{i=1}^{k}$ denote the partial placement up to execution interval $k$.
For $k\ge2$, let $\Lambda^{(k)}(\pi_{1:k})\ge0$ denote the cache-recomputation overhead incurred when running execution interval $k$.
This term captures the extra carbon emissions from regenerating context or cache state after a cache miss; in the general formulation, it may depend on the previous placement history.

\name{} selects a deadline-feasible sequence of candidate intervals and the serving region for each selected interval.
The placement objective is to minimize predicted carbon emissions and cache-recomputation overhead, subject to deadline feasibility and chronological execution order:
\begin{tcolorbox}[colback=white,colframe=black!50,boxrule=0.5pt,left=0.6mm,right=0.6mm,top=0.5mm,bottom=0.5mm]
\begin{equation}
\begin{aligned}
    \min_{\pi}\quad
    J_{\mathrm{ord}}(\pi)
    &=\sum_{k=1}^{K}
      \widehat{E}^{(k)}_{r_k,t_k}\widehat{CI}_{r_k,t_k}
      +\sum_{k=2}^{K}\Lambda^{(k)}(\pi_{1:k}) \\
    \text{s.t.}\quad
    &r_k\in\mathcal{R},\quad t_k\in\mathcal{H}, && k=1,\ldots,K, \\
    &t_1<t_2<\cdots<t_K. &&
\end{aligned}
\label{eq:ordered-interval-objective}
\end{equation}
\end{tcolorbox}

\begin{table}[t]
	\centering\small
	\caption{Notation in the \name{} placement objective in \Cref{eq:ordered-interval-objective}.}
	\label{tab:interval-objective-notation}
	\begin{tabular}{@{}L{0.15\linewidth}L{0.8\linewidth}@{}}
		\toprule
		Symbol & Description \\
		\midrule
		$\mathcal{R}$ & Candidate execution regions. \\
		$\mathcal{H}$ & Deadline-feasible candidate intervals. \\
		$H$ & Number of deadline-feasible candidate intervals. \\
		$K$ & Number of ordered execution intervals induced by $\widetilde{T}$. \\
		$r_k$ & Region assigned to execution interval $k$. \\
		$t_k$ & Candidate interval assigned to execution interval $k$. \\
		$\widehat{E}^{(k)}_{r,t}$ & Predicted energy for execution interval $k$ in region $r$ during candidate interval $t$. \\
		$\widehat{CI}_{r,t}$ & Predicted carbon intensity for region $r$ during candidate interval $t$. \\
		$\Lambda^{(k)}(\pi_{1:k})$ & Cache-recomputation carbon emissions incurred during execution interval $k$. \\
		$\Gamma^{(k)}_{r,t}$ & Cache-recomputation penalty incurred when execution interval $k$ switches to grid $r$ during candidate interval $t$. \\
		\bottomrule
	\end{tabular}
\end{table}

\subsection{Dynamic Programming Under Cache Recomputation}
\label{sec:theory-dp}
In this section, we consider a local cache-recomputation structure and show that, under this structure, a dynamic-programming algorithm solves the placement objective in \Cref{eq:ordered-interval-objective} exactly.

\begin{assumption}[Local cache recomputation]
\label{assump:local-cache-cost}
	For every $k\ge 2$, the cache-recomputation overhead incurred in execution interval $k$ depends only on whether the service region changes between execution intervals $k-1$ and $k$. Intuitively, remaining in the same region preserves the context cache, whereas shifting to a different region causes a cache miss and requires the context to be regenerated. Therefore, the additional carbon overhead in execution interval $k$ depends only on the transition from interval $k-1$ to interval $k$, rather than on the full placement history. Specifically, there exists a nonnegative cache-recomputation carbon penalty $\Gamma^{(k)}_{r,t}\ge 0$ such that, for every partial placement $\pi_{1:k}$,
\begin{equation}
	\Lambda^{(k)}(\pi_{1:k}) =\mathbf{1}\{r_k\ne r_{k-1}\}\Gamma^{(k)}_{r_k,t_k}. \label{eq:local-cache-penalty} 
\end{equation}
\end{assumption}

Under \Cref{assump:local-cache-cost}, we formulate the following dynamic-programming algorithm.
Let $D^{(k)}_{r,t}$ be the minimum cost for the first $k$ execution intervals when execution interval $k$ runs in region $r$ during candidate interval $t$; infeasible states have value $+\infty$.
The base case is $D^{(1)}_{r,t}=\widehat{E}^{(1)}_{r,t}\widehat{CI}_{r,t}$.
For $k\ge2$, the recurrence is
\begin{equation}
    D^{(k)}_{r,t}
    =\widehat{E}^{(k)}_{r,t}\widehat{CI}_{r,t}
     +\min_{\substack{q\in\mathcal{R},\ s\in\mathcal{H}:\ s<t}}
        \left\{D^{(k-1)}_{q,s}
        +\mathbf{1}\{q\ne r\}\Gamma^{(k)}_{r,t}\right\}.
    \label{eq:interval-dp-recurrence}
\end{equation}
The recurrence searches over feasible predecessors: execution interval $k-1$ must finish in an earlier candidate interval $s<t$.
The transition cost is zero when the cache is preserved ($q=r$) and equals the cache-recomputation penalty when \name{} switches regions ($q\neq r$).
The optimal interval-level carbon estimate is $C^*=\min_{r\in\mathcal{R},\ t\in\mathcal{H}}D^{(K)}_{r,t}$.
When computing each minimum, the dynamic-programming algorithm stores a backpointer to the predecessor state $(q,s)$ that attained the minimum.
Tracing these backpointers from the optimal final state reconstructs the placement plan $\{(r_k,t_k)\}_{k=1}^{K}$.

\begin{theorem}[\name{} optimality and complexity]
	\label{thm:greenclaw-optimal}
	Under \Cref{assump:local-cache-cost}, the dynamic-programming algorithm returns a minimizer of \Cref{eq:ordered-interval-objective}. The recurrence can be evaluated in $O(K|\mathcal{R}|H)$ time and $O(|\mathcal{R}|H)$ working memory, plus backpointers for reconstructing the placement plan.
\end{theorem}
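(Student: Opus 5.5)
The argument splits into a correctness part and a complexity part.

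\textbf{Correctness.} We prove by induction on $k$ that $D^{(k)}_{r,t}$ equals the minimum over all feasible partial placements $\pi_{1:k}$ with $t_1<\cdots<t_k$ and $(r_k,t_k)=(r,t)$ of the partial objective
\[
J_k(\pi_{1:k})=\sum_{i=1}^{k}\widehat{E}^{(i)}_{r_i,t_i}\widehat{CI}_{r_i,t_i}+\sum_{i=2}^{k}\Lambda^{(i)}(\pi_{1:i}),
\]
with value $+\infty$ when no such placement exists.

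The base case $k=1$ holds by definition. For the inductive step, use \Cref{assump:local-cache-cost} to rewrite $\Lambda^{(k)}(\pi_{1:k})=\mathbf{1}\{r_k\ne r_{k-1}\}\Gamma^{(k)}_{r_k,t_k}$. This makes
\[
J_k(\pi_{1:k})=J_{k-1}(\pi_{1:k-1})+\widehat{E}^{(k)}_{r,t}\widehat{CI}_{r,t}+\mathbf{1}\{r_{k-1}\ne r\}\Gamma^{(k)}_{r,t}.
\]
The added terms depend on the prefix only through its last state $(q,s)=(r_{k-1},t_{k-1})$, and feasibility of the extension requires only $s<t$. Minimizing first over prefixes that end at a fixed $(q,s)$ gives $D^{(k-1)}_{q,s}$ by the induction hypothesis. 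Minimizing then over $(q,s)$ with $s<t$ yields exactly \Cref{eq:interval-dp-recurrence}.

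This is the one place needing care. The argument is an exchange (optimal-substructure) step, and it works precisely because the penalty has no dependence on earlier history. We should state both inequalities explicitly:
\begin{itemize}
\item ($\ge$) Every feasible $\pi_{1:k}$ has cost at least the recurrence value.
\item ($\le$) The recurrence value is attained by concatenating an optimal prefix ending at the minimizing $(q,s)$ with $(r,t)$. The concatenation is feasible because $s<t$.
\end{itemize}
Minimizing $D^{(K)}_{r,t}$ over all final states then gives $C^*=\min_\pi J_{\mathrm{ord}}(\pi)$. Following the stored backpointers reconstructs a placement whose cost equals $C^*$, which is therefore a minimizer. If $C^*=+\infty$, no feasible placement exists and the fallback applies.

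\textbf{Complexity.} A naive evaluation of \Cref{eq:interval-dp-recurrence} costs $O(|\mathcal{R}|H)$ per state, for $O(K|\mathcal{R}|^2H^2)$ in total. The main obstacle is bringing this down to the claimed $O(K|\mathcal{R}|H)$. The plan is to decompose the inner minimum using two prefix minima over $s<t$:
\[
M_q(t)=\min_{s<t}D^{(k-1)}_{q,s},\qquad G(t)=\min_{q}M_q(t).
\]
The inner minimum then equals
\[
\min\Bigl\{M_r(t),\;\min_{q\ne r}M_q(t)+\Gamma^{(k)}_{r,t}\Bigr\}.
\]
Because $\Gamma^{(k)}_{r,t}\ge0$, we may replace $\min_{q\ne r}M_q(t)$ with $G(t)$: if the minimizing $q$ equals $r$, the first branch already dominates. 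So the inner minimum is $\min\{M_r(t),\,G(t)+\Gamma^{(k)}_{r,t}\}$.

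For each layer $k$, sweep $t$ in increasing order. Each $M_q(t)$ is updated in $O(1)$ from $M_q(t-\Delta t)$, together with its argmin for the backpointer, and $G(t)$ is then computed in $O(|\mathcal{R}|)$. This gives $O(|\mathcal{R}|H)$ work per layer and $O(K|\mathcal{R}|H)$ overall.

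Only layers $k-1$ and $k$, plus the running prefix minima, are kept at any time, which gives $O(|\mathcal{R}|H)$ working memory. The backpointers take $O(K|\mathcal{R}|H)$ additional storage, as the statement allows.
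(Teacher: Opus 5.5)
Your proof is correct and follows the paper's argument. Both use the same induction on $k$ to establish the recurrence (you state the two inequalities more explicitly), and both use the same per-region prefix minima over $s<t$ to reach $O(K|\mathcal{R}|H)$ time and $O(|\mathcal{R}|H)$ working memory.

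The one local difference is how $\min_{q\ne r}P_q(t)$ is obtained. The paper computes it exactly by storing the two smallest region--value pairs. You instead substitute the global minimum $G(t)$, which is valid only because $\Gamma^{(k)}_{r,t}\ge 0$, and it is harmless for backpointers: if the argmin of $G(t)$ is $r$ itself, that branch can win only when $\Gamma^{(k)}_{r,t}=0$. Your version is slightly simpler, while the paper's does not rely on nonnegativity.
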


The dynamic-programming algorithm avoids repeatedly searching over all possible placements of the previous execution interval.
A naive evaluation checks every possible previous region and earlier candidate interval for every state, requiring $O(K|\mathcal{R}|^2H^2)$ time.
Under the local cache-recomputation structure, the previous-placement information can be summarized and reused.
Specifically, for each current state $(r,t)$ at execution interval $k$, the transition only needs two summaries over placements of execution interval $k-1$ in earlier candidate intervals $s<t$: the best placement that remains in the same region $r$, and the best placement that comes from any different region.
The first summary gives the cost of preserving the cache, while the second summary gives the best switching cost before adding the recomputation penalty $\Gamma^{(k)}_{r,t}$.
This allows each transition to be evaluated in constant time, reducing the overall complexity to $O(K|\mathcal{R}|H)$.
The details and proof of \Cref{thm:greenclaw-optimal} are provided in \Cref{app:greenclaw-optimal-proof}.

\subsection{Deadline Guarantee}
\label{sec:theory-deadline}

Let $T_{\mathrm{run}}$ denote the actual accumulated runtime of the task, excluding idle waiting between execution intervals.
Recall the conservative duration estimate $\widetilde{T}$.

\begin{theorem}[Deadline guarantee]
\label{thm:interval-deadline-guarantee}
Suppose the dynamic-programming algorithm returns a feasible \name{} schedule. If $T_{\mathrm{run}}\le\widetilde{T}$, then the schedule completes the task by the deadline $d$.
\end{theorem}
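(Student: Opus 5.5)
The plan is a direct accounting argument on the structure of a feasible schedule returned by the dynamic program. A feasible schedule is a placement $\pi=\{(r_k,t_k)\}_{k=1}^{K}$ with $t_k\in\mathcal{H}$ and $t_1<t_2<\cdots<t_K$, where $K=\lceil\widetilde{T}/\Delta t\rceil$. I would first record two facts from the construction of $\mathcal{H}$ in \Cref{sec:theory-problem}.

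\textbf{Step 1 (candidate intervals end by $d$).} Every $t\in\mathcal{H}$ has the form $a+j\Delta t$ with $j\le H-1$. Since $H=\lfloor (d-a)/\Delta t\rfloor$, we get $t+\Delta t\le a+H\Delta t\le d$. So every candidate interval $I_t$ lies inside $[a,d]$.

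\textbf{Step 2 (disjoint slots).} The candidate timestamps lie on a grid of spacing $\Delta t$. Strict ordering therefore gives $t_{k+1}\ge t_k+\Delta t$, so the slots $I_{t_1},\dots,I_{t_K}$ are pairwise disjoint and chronologically ordered. The total execution time the schedule reserves is then $K\Delta t\ge\widetilde{T}$, because $K=\lceil\widetilde{T}/\Delta t\rceil$.

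\textbf{Step 3 (runtime fits before the last slot closes).} Under the execution semantics of \Cref{sec:design-execution-controller}, the task runs during the selected slots in order and pauses only between them. The accumulated runtime available by time $t_k+\Delta t$ is therefore at least $k\Delta t$. Let $k^\ast$ be the smallest $k$ with $k\Delta t\ge T_{\mathrm{run}}$. Because $T_{\mathrm{run}}\le\widetilde{T}\le K\Delta t$, such a $k^\ast\le K$ exists. The task has therefore accumulated its full runtime, and so completed, by time $t_{k^\ast}+\Delta t\le t_K+\Delta t\le d$ by Step 1.

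\textbf{Main obstacle.} The delicate point is Step 3, which requires formalizing that "work completed" equals accumulated runtime inside the selected slots. This needs three modelling facts. Pausing and resuming must preserve progress, since the controller resumes from saved state. Idle gaps must not count toward $T_{\mathrm{run}}$, which matches its definition. Region switches must not add runtime beyond what $T_{\mathrm{run}}$ already includes. On the last point, I would note that cache recomputation, if it adds time, is part of the actual accumulated runtime $T_{\mathrm{run}}$, so the hypothesis $T_{\mathrm{run}}\le\widetilde{T}$ already covers it. With these facts stated explicitly, the argument is a monotone accounting of runtime over disjoint slots and needs no property of the DP beyond feasibility. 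In particular, neither \Cref{thm:greenclaw-optimal} nor \Cref{assump:local-cache-cost} is used.
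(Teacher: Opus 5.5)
Your proposal is correct and follows essentially the same argument as the paper. Membership in $\mathcal{H}$ bounds the end of the last slot by $a+H\Delta t\le d$, and strict ordering on the $\Delta t$-grid makes the $K$ slots disjoint with capacity $K\Delta t\ge\widetilde{T}\ge T_{\mathrm{run}}$. The controller's pause/resume semantics then let the task finish inside those slots. The only differences are minor: the paper derives the chronological ordering from the DP backpointers ($s<t$) instead of taking it as part of ``feasible,'' while your $k^\ast$ refinement slightly sharpens the completion time and states the modelling assumptions (including that cache recomputation time counts toward $T_{\mathrm{run}}$) more explicitly.
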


Recall that the \textit{Execution Time Estimator} in \Cref{sec:design-duration-estimator} decomposes the task into phases and assigns the $i$-th phase a conservative duration estimate $\widetilde{t}_i$, with $\widetilde{T}=\sum_i \widetilde{t}_i$. Let $t_i^{\mathrm{run}}$ denote the actual execution time spent on the corresponding phase. One sufficient condition for $T_{\mathrm{run}}\le \widetilde{T}$ is that $t_i^{\mathrm{run}}\le \widetilde{t}_i$ for every phase $i$. This motivates accounting for uncertainty during phase-level duration estimation and helps ensure that the reserved execution intervals provide enough time for the task to finish before the deadline.

\section{Evaluation} \label{sec:evaluation}

In this section, we first describe the evaluation methodology and then report the results.

\subsection{Methodology} \label{subsec:methodology}

\textbf{Local Platform.}
We use a Mac mini with an M4 processor and \SI{16}{\giga\byte} memory as the local platform to host OpenClaw and optimization mechanisms from \name{}.
We evaluate the overhead on the local platform in \Cref{subsec:local_overhead}.

\textbf{Models.}
We primarily evaluate GPT-5.4 as the backend model used by OpenClaw through OpenAI's API.
Additionally, we evaluate Gemini 3.1 Pro to compare \name{}'s carbon savings across models in \Cref{subsec:gemini_savings}.
On the local platform, we use a 4-bit-quantized Gemma 4 E4B model for execution-time estimation~\cite{googleGemma4ModelCard}; this lightweight model is suitable for mobile and edge devices. 
The local model is served using Ollama~\cite{ollamaDocs}, whose built-in prompt and context caching reduce inference time because the estimator prompts share substantial context for task descriptions. 
In our experiment, the estimator has no prior knowledge of the benchmark to ensure fair evaluation.

\textbf{Benchmark and Evaluation Setup.}
As in \Cref{sec:characterization}, we evaluate all 60 AI agent tasks in WildClawBench, which span six categories.
Because individual benchmark tasks have relatively short execution times, whereas real-world agent workflows are increasingly long-running, we combine shorter tasks within each category into a single workload.
This yields six workloads, one per category.
Within each task, the backend model maintains a context cache. Because the tasks within a workload are independent, however, the cache is not reused across tasks; we assume a cache lifetime of 24 hours.
For each workload, we record an OpenClaw execution trace.
Then, we develop a simulator that replays the traces to reflect different carbon intensity scenarios, and optimization decisions made by \name{} at runtime. 
We evaluate \name{} under four workload deadlines: 3, 6, 12, and 24 hours. Tighter deadlines reflect more on-demand execution, while longer deadlines reflect more flexible workload scheduling.

\textbf{Carbon-Intensity Traces.}
We use carbon-intensity traces for the four grids introduced in \Cref{tab:grid_ci}: CISO data from EIA~\cite{eiaGridMonitor}, AT and FI data from ENTSO-E~\cite{entsoeTransparency}, and GB data from NESO~\cite{neso}.
We use two years of traces for training (May 2023--May 2025) and one month of traces (February 2026) for evaluation.
We evaluate 24 candidate start times per day over this 1-month testing trace, with one start time per hour, and compute the emissions for each start time. We then average these emissions over the 1-month trace.

\textbf{Baselines.}
Using the carbon emissions calculated as described above, we compare \name{} against two baselines:
(1)~\textbf{Current-optimal:} The carbon emissions from the original OpenClaw execution when run on the grid with the lowest carbon intensity at the workload start time.
We use this baseline as the primary comparison in this work. 
(2)~\textbf{Average:} The average carbon emissions when running the agent workloads on each grid, which reflects a carbon-agnostic scenario.

\subsection{Overall Carbon Emission Savings} 
\label{subsec:overall_savings}

\textbf{Carbon Savings.}
\Cref{fig:gpt_savings} presents the carbon emission savings when using GPT-5.4 as the backend model under different workload deadlines.
We observe that as the deadline becomes more flexible, the carbon savings increase, as \name{} has more flexibility to defer and shift the workload to other grids. 
Compared with the current-optimal and average baselines, \name{} achieves average carbon savings of 3.3--37.5\,\% and 34.9--57.9\,\%, respectively.
\Cref{fig:grid_usage} summarizes the fraction of execution time assigned to each grid under different deadlines.  
The results are consistent with the carbon intensity distributions shown in \Cref{fig:ci-box}. 
Notably, CISO is used most often, and its usage increases as the deadline becomes longer. This is because CISO has a wide daily carbon intensity distribution, with lower carbon intensity values than the other grids. Thus, \name{} uses CISO more often to leverage its low-carbon-intensity periods.

\begin{figure}
    \centering
    \begin{subfigure}[t]{0.5\linewidth}
        \centering
        \includegraphics[width=\linewidth]{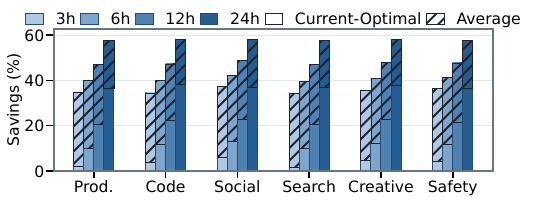}
        \caption{}
        \label{fig:gpt_savings}
    \end{subfigure}
    \hfill
    \begin{subfigure}[t]{0.24\linewidth}
        \centering
        \includegraphics[width=\linewidth]{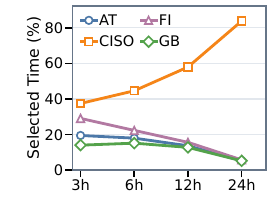}
        \caption{}
        \label{fig:grid_usage}
    \end{subfigure}
    \hfill
    \begin{subfigure}[t]{0.24\linewidth}
        \centering
        \includegraphics[width=\linewidth]{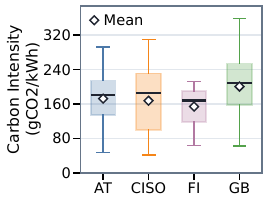}
        \caption{}
        \label{fig:ci-box}
    \end{subfigure}
    \caption{(a) Overall carbon savings, (b) grid usage, and (c) carbon intensity distribution. Model is GPT-5.4.}
    \Description{}
\end{figure}

\begin{figure}
    \centering
    \includegraphics[width=\linewidth]{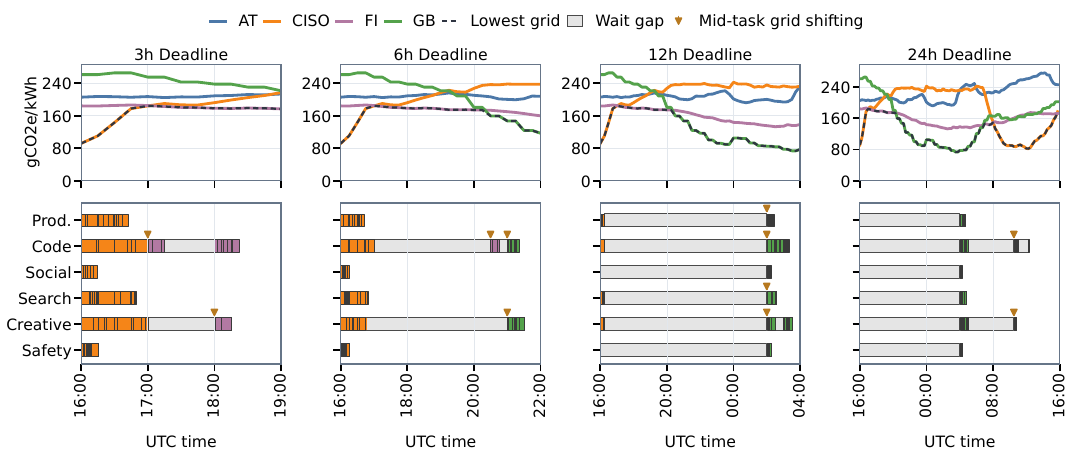}
    \caption{Example of agent execution optimized by \name{}. Start time Feb 4, 2026 at 16:00 UTC. }
    \label{fig:timeline_example}
    \Description{}
\end{figure}

\textbf{Timeline Examples.}
\Cref{fig:timeline_example} shows examples of agent workload execution on Feb 4, 2026, starting at 16:00 UTC, under four different deadlines.
We first observe that, when the deadline is tight, \name{} avoids introducing long waiting periods to meet the deadline. As the deadline becomes more flexible, \name{} spends more time waiting for lower-carbon periods.
Second, we find that mid-task migration occurs when there is a major change in carbon intensity, as indicated by the triangle marker.
For example, in the 3-hour deadline case, the workload is shifted from CISO to AT at 17:00 UTC due to the predicted increase in CISO's carbon intensity. 
Finally, we observe that cross-grid workload shifting is more frequent for long-running tasks, \eg{} Code and Creative.
Short-running workloads can often wait for a low-carbon period and then complete on a single grid, avoiding cache recomputation.
By contrast, long-running workloads may benefit from migrating before completion because more work remains after the shift.

\subsection{Comparison across Models} \label{subsec:gemini_savings}

\begin{figure}
    \centering
    \begin{subfigure}[t]{0.32\linewidth}
    \includegraphics[width=1\linewidth]{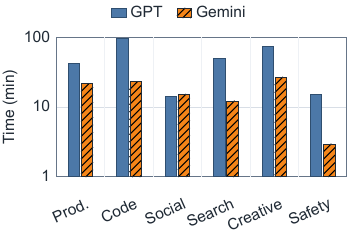}
    \caption{} 
    \label{fig:gemini_time}
    \end{subfigure}
    \hfill
    \begin{subfigure}[t]{0.32\linewidth}
    \includegraphics[width=1\linewidth]{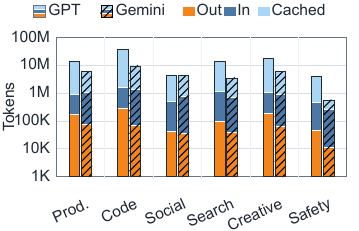}
    \caption{}
    \label{fig:gemini_token}
    \end{subfigure}
    \hfill
    \begin{subfigure}[t]{0.32\linewidth}
    \includegraphics[width=1\linewidth]{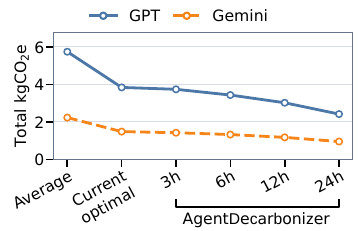}
    \caption{}
    \label{fig:gemini_carbon}
    \end{subfigure}
    \caption{(a) Execution time, (b) token usage, and (c) carbon emissions of Gemini 3.1 Pro vs GPT-5.4.}
    \label{fig:gemini_comparison}
    \Description{}
\end{figure}

We evaluate Gemini 3.1 Pro as a second model API to assess the effectiveness of \name{} across different models. 
\Cref{fig:gemini_time,fig:gemini_token} show the per-workload execution times and token usage, respectively, for both GPT-5.4 and Gemini 3.1 Pro.
Gemini 3.1 Pro has shorter execution times and lower token usage than GPT-5.4.

Because public data and studies on the energy consumption and carbon emissions of Gemini Pro series models are unavailable, we estimate the energy consumption of input, output, and cached input tokens using the same method as in \Cref{subsec:agent-energy}, based on the relative pricing differences between GPT-5.4 and Gemini 3.1 Pro~\cite{googleGeminiPricing,openaiPricing}, \ie{} Gemini 3.1 Pro costs \$2 per million input tokens compared with \$2.5 for GPT-5.4.
Using this estimate, \Cref{fig:gemini_carbon} compares the estimated total carbon emissions of the two models and shows that Gemini 3.1 Pro has lower estimated carbon emissions.
However, the carbon saving ratios remain similar: compared with the current-optimal and average baselines, \name{} reduces carbon emissions by 4--35.8\,\% and 35.2--57.3\,\%, respectively, under different deadlines. 
This experiment demonstrates that \name{} can reduce carbon emissions from AI-agent workloads across different backend models.


\begin{figure}
    \centering
    \includegraphics[width=\linewidth]{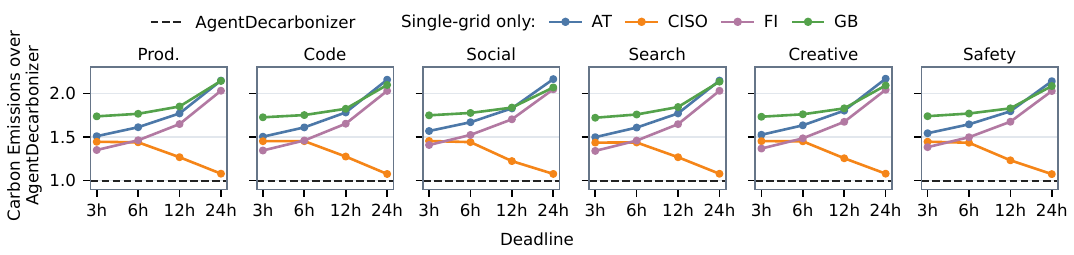}
    \caption{Emissions of single-grid only scenarios compared to \name{}. The model API is GPT-5.4. }
    \label{fig:single_grid}
    \Description{}
\end{figure}

\subsection{Ablation Studies} \label{subsec:ablation}

In this section, we conduct ablation studies to identify the sources of carbon savings.

\subsubsection{Impact of Multi-grid Migration}
We first study the impact of shifting across grids by comparing against a single-grid scenario that only waits for lower-carbon-intensity periods without shifting.
\Cref{fig:single_grid} presents the average carbon emissions of the single-grid scenario, normalized to \name{}, under different deadlines. 
We observe that shifting across grids in \name{} reduces carbon emissions by 34.1--45.7\,\% on average across workloads, depending on the deadline.
For most grids, longer deadlines allow \name{} to save more carbon.
The only exception is CISO, whose relative emissions compared with \name{} decrease as the deadline increases. 
As also shown in \Cref{fig:ci-box}, CISO has the highest carbon intensity variation, making it possible to mitigate a significant fraction of emissions by waiting alone. 
Overall, this experiment shows that shifting agent workloads across grids is a major source of carbon reduction.

\subsubsection{Impact of Carbon Intensity Predictor}

We then evaluate the impact of the carbon intensity predictor~\cite{yan2025ensembleci} used by \name{} by comparing it against an ideal scenario that uses ground-truth carbon intensities.
In addition, we compare \name{}'s hourly updated dynamic prediction with a static, one-time prediction.
\Cref{fig:ideal_carbon_predictor_savings} compares the carbon emission savings of these predictors over the default \textit{current-optimal} baseline. 
We observe that \name{} achieves carbon savings close to the ideal scenario, while the one-time prediction has lower carbon savings under longer deadlines because it fails to track carbon intensity changes over longer durations. 
This is because prediction errors grow with the horizon, reaching an average MAPE of 16.3\,\% at 24 hours, while next-hour predictions have a low MAPE of 3.3\,\%.
By updating predictions hourly, \name{}'s planner benefits from more accurate near-term carbon intensity estimates. 
This experiment demonstrates that, although carbon intensity prediction is imperfect, \name{} remains robust to prediction errors and substantially outperforms the one-time predictor under longer deadlines.

\subsubsection{Impact of the Execution-Time Estimator}

We finally evaluate the impact of the execution time estimator. 
We introduce an ideal scenario that uses the ground-truth execution time and a worst-case scenario that uses the execution time of the slowest task as a fixed estimate.
\Cref{fig:ideal_time_estimate} compares these scenarios by reporting their carbon emission savings over the default \textit{current-optimal} baseline. 
Overall, \name{} is close to the ideal scenario. 
We also compare the actual execution time with \name{}'s conservative estimate in \Cref{fig:actual_vs_estimate}. 
For all workloads, the estimated time is longer than the actual time, demonstrating that the estimate is conservative enough to ensure that workloads complete before their deadlines.
Compared to longer-running workloads, shorter workloads have larger slack between the estimated and actual execution times because the estimator uses a generic and conservative design.
In our evaluation, all workloads complete before the user-specified deadlines.

\begin{figure}[t]
    \centering
    \begin{minipage}[t]{0.48\linewidth}
        \centering
        \includegraphics[width=\linewidth]{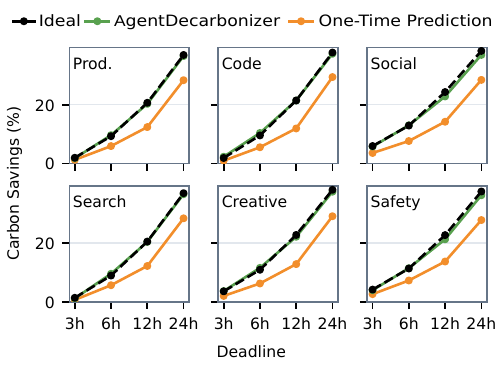}
        \caption{Carbon savings compared to ideal and one-time carbon intensity predictors. Model API: GPT-5.4.}
        \label{fig:ideal_carbon_predictor_savings}
        \Description{}
    \end{minipage}
    \hfill
    \begin{minipage}[t]{0.48\linewidth}
        \centering
        \includegraphics[width=\linewidth]{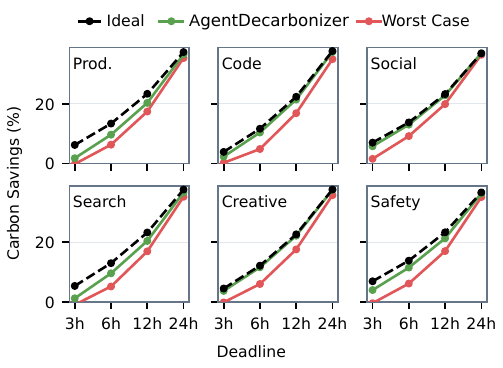}
        \caption{Carbon savings compared to ideal and worst-case execution time estimates. Model API: GPT-5.4.}
        \label{fig:ideal_time_estimate}
        \Description{}
    \end{minipage}
\end{figure}

\subsection{\name{} Overhead on Local Device} \label{subsec:local_overhead}

\begin{figure}
    \centering
    \begin{minipage}[t]{0.3\linewidth}
        \centering
        \includegraphics[width=\linewidth]{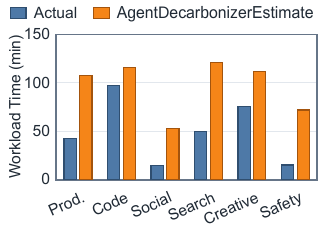}
        \caption{\name{}'s estimate vs. actual time. Model API: GPT-5.4.}
        \label{fig:actual_vs_estimate}
        \Description{}
    \end{minipage}
    \hfill
        \begin{minipage}[t]{0.45\linewidth}
            \centering
            \includegraphics[width=\linewidth]{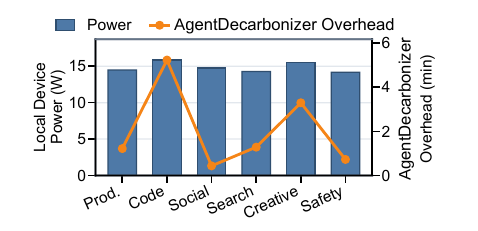}
            \caption{Power consumption and \name{} overhead on local device. }
            \label{fig:local_power}
        \end{minipage}
        \hfill
        \begin{minipage}[t]{0.2\linewidth}
            \centering
            \includegraphics[width=\linewidth]{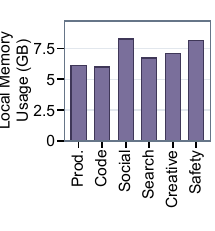}
            \caption{Memory usage on local device.}
            \label{fig:local_memory}
        \end{minipage}
\end{figure}

In this experiment, we evaluate the overhead of \name{} on the local Mac mini device.
We measure the power of the local Mac mini device using \texttt{powermetrics}~\cite{apple_powermetrics} every \SI{200}{\milli\second}.
\Cref{fig:local_power} presents \name{}'s average power consumption and total execution time. The local Gemma model used for execution-time estimation is the main source of overhead.
Overall, \name{}'s execution overhead is only \SIx{0.7}--\SI{5.2}{\minute} with an average power of \SI{14.9}{\watt}. 
This overhead is low compared to the long-running agent tasks. 
\Cref{fig:local_memory} further presents the peak memory usage. On average, the memory usage is only \SI{6.5}{\giga\byte}, mainly from the local Gemma model, which can fit into most PCs and laptops. 

\section{Related Work and Discussion} \label{sec:related}

In this section, we discuss related work. 

\textbf{Carbon-aware optimizations.}
Carbon-aware computing systems reduce operational emissions by exploiting variation in grid carbon intensity over time and across locations.
Temporal shifting delays flexible workloads until cleaner electricity is available, while spatial shifting moves work to lower-carbon regions~\cite{wiesner2021wait,souza2024casper,Caribou,CarbonExplorer,gaia,zhang2026carbonawarecompute}.
These systems establish the general opportunity for carbon-aware load shifting.
While \name{} builds on these directions, it addresses new challenges in reducing carbon emissions from AI-agent workloads, including meeting user deadlines during temporal shifting and accounting for context-cache recomputation overhead after spatial shifts.

\textbf{Carbon and Energy Analysis for AI Systems.}
Recent work has studied carbon mitigation, energy modeling, and serving optimization for AI and LLM systems, including model-, token-, and prompt-level energy estimates~\cite{li2025ecoserve,greenLLM,stojkovic2025dynamollm,faiz2024llmcarbon,samsi2023fromwordstowatt,zhang2026carbonawarecompute,nguyen2024towards}.
These approaches are complementary to \name{}.
Rather than changing the serving system, \name{} optimizes agent execution while preserving the original model API and agent behaviors.
In this work, we develop our energy estimate using a prior study \cite{jegham2025howhungryai} and vendor pricing. 
However, \name{} is not tied to a specific energy model; more precise and up-to-date estimates can replace the current values to improve carbon optimization.


\textbf{Context Caching.}
LLM serving systems increasingly rely on prompt and context reuse to accelerate long-context workloads~\cite{gim2024promptcache,cachedattention,yao2024cacheblend,yan2025contextcache,yu2025smartcache}.
In the current \name{} implementation, we assume that the context cache is local to a serving backend, \eg{} a server or cluster. This assumption is consistent with Gemini's context cache~\cite{googleGeminiContextCachingUse,googleGeminiContextCaching}, AWS Bedrock~\cite{awsbedrockPromptCache}, MoonCake~\cite{moocacke2024arxiv}, and prior studies~\cite{cachedattention,hcache,gim2024promptcache}. 
As a result, shifting agent execution to another grid can lose cache locality and require recomputation, with costs that grow with model size, context length, and generated reasoning history.
There have been recent proposals that stream context through the network~\cite{cachegen,lmcache}.
\name{} can support these techniques by using a lower shifting penalty in its cache carbon model when the serving system supports cross-region cache transfer.

\textbf{AI Agent Execution Time Prediction.}
Recent studies predict AI agent workloads, resource use, and token consumption to support budgeting, scheduling, and resource control for agent executions~\cite{bai2026agentsmoney,salim2026tokenomics,fan2025sweeffi,zheng2026agentcgroup}.
\name{} differs from general prediction work by using conservative estimates that are updated in flight to make deadline-aware load-shifting decisions during agent execution.
We also implement \name{}'s execution time estimator as a modular component, allowing future predictors to improve the planner and make better shifting decisions.
Besides using a more accurate estimator, the existing estimator's phase categories, step counts, per-step token estimates, and generation speed assumptions can also be customized for specific agent tasks, models, and deployment settings.  
In this work, we use values from publicly available datasets and research to keep the estimator generic.

\section{Conclusions}
\label{sec:conclusion}

This paper aims to mitigate the carbon emissions of autonomous AI agents. 
We first characterize OpenClaw workloads using WildClawBench to understand the carbon emissions of AI agents.
Based on these characteristics, we present \name{}, a carbon optimizer that shifts agent execution to lower-carbon time periods and grids while meeting user deadlines.
\name{} accounts for carbon savings from context caching and avoids spatial shifts when cache recomputation would outweigh the carbon benefit.
Our evaluation shows that \name{} reduces carbon emissions from agent workloads by up to 57.9\,\% while meeting user-specified deadlines.

\bibliographystyle{ACM-Reference-Format}
\bibliography{bib/ml,bib/carbon,bib/sys,bib/misc}

\appendix
\section{Appendix}
\label{app:theory-proofs}

\subsection{Execution-Time Estimator Calibration Categories}
\label{app:duration-estimator-calibration}

\Cref{tab:time-estimate-category} summarizes the heuristic calibration categories used by \name{}'s execution time estimator.
For each category, we use the typical step count or range reported in recent studies, as listed in the table.
The estimator first adjusts the number of steps relative to this baseline according to the predicted task complexity.
It then converts the resulting step count to an approximate wall-clock time using 267.6 tokens per step from Open-SWE-Traces~\cite{openSWETraces}.
To estimate API generation time, we use conservative rates derived from OpenRouter's model-speed statistics: 74 tokens/s for GPT-5.4 and 105 tokens/s for Gemini 3.1 Pro~\cite{openrouterGPT54,openrouterGemini31}.
Because OpenRouter reports real-time statistics that can vary over time, we use conservative rates rather than instantaneous measurements.


\begin{table}[h]
    \centering
    \footnotesize
    \setlength{\tabcolsep}{1.5pt}
    \caption{Phase categories and number of steps per phase.} \label{tab:time-estimate-category}
    \begin{tabular}{lll}
        \toprule
        Category & Typical steps/range & Reference \\
        \midrule
        Single-context text transformation & 1 & IFEval~\cite{zhou2023ifeval} \\
        Structured strict-format output & 4 & ToolLLM~\cite{toolllm} \\
        External retrieval/browser/API workflow & 7.3 & Mind2Web~\cite{mind2web} \\
        Multi-turn communication/coordination & 8.93--15.39 & MultiWOZ~\cite{multiwoz} \\
        Repository/code/debug workflow & 14.71 & SWE-agent~\cite{sweagent} \\
        Multimodal visual/computer-use workflow & 15 & OSWorld~\cite{osworld} \\
        Generation plus refinement workflow & 9 & Self-Refine~\cite{selfrefine} \\
        Policy/adversarial-instruction workflow & 6.3--9.3 & AgentDyn~\cite{agentdyn} \\
        Data analysis/computation workflow & 5.8--11.6 & DA-Code~\cite{huang2024dacode} \\
        \bottomrule
    \end{tabular}
\end{table}

Note that \Cref{tab:time-estimate-category} covers common categories that describe execution phases. 
We did not profile the benchmark tasks to fine-tune these numbers, in order to present a generic method. 
However, \name{}'s estimator can incorporate additional categories and customize the number of steps depending on the target tasks.
Likewise, the per-step token count and generation time can be customized based on the specific model used by the workload.

\subsection{Feature-Extraction Prompt}
\label{app:duration-estimator-step1-prompt}

\noindent System prompt for the local model to extract features from the task.\par
\begin{lstlisting}[style=promptblock]
You extract compact facts for execution time prediction.
Only agent_prompt_section contains the task input.
Model-trace features summarize the target model's trace shape, not the estimator LLM's trace.
Do not solve the task.
Return only valid minified JSON. No markdown. No comments.
Begin with { and ensure that all braces are closed.
Use integer seconds.
Keep arrays short: at most 6 items, except for the phases array.
Use plain labels, not long paragraphs.
You may use the 'thought' key to reason before other keys, but do not wrap the other schema keys under thought, analysis, or any helper keys.
\end{lstlisting}

\noindent Instruction template.\par
\begin{lstlisting}[style=promptblock]
Step 1. Return exactly these top-level keys and no others:
thought, category, modality, summary,
inputs, outputs, tools, external_work, format_strictness, multimodal,
safety_traps, complexity, batch_item_count, batch_unit, repeated_work.

Fill this JSON schema:
{
  "thought": "",
  "category": "",
  "modality": "",
  "summary": "",
  "inputs": [],
  "outputs": [],
  "tools": [],
  "external_work": "none|low|medium|high",
  "format_strictness": "low|medium|high",
  "multimodal": "none|image|audio|video|mixed",
  "safety_traps": [],
  "complexity": "low|medium|high",
  "batch_item_count": 1,
  "batch_unit": "",
  "repeated_work": false
}

If the task requires doing similar work for many files/items/records/pages/messages,
set batch_item_count to the repeated item count and repeated_work to true.

{metadata, local_signal_features, model_trace_features, agent_prompt_section}
\end{lstlisting}

\subsection{Phase-Decomposition Prompt}
\label{app:duration-estimator-step2-prompt}

\noindent System prompt for the local model to decompose the task.\par
\begin{lstlisting}[style=promptblock]
You split task work into phases for timing.
Use only the provided JSON. Do not solve the task.
Return only valid minified JSON. No markdown. No comments.
Begin with { and ensure that all braces are closed.
Use integer seconds.
Keep arrays short: at most 6 items, except for the phases array.
Use plain labels, not long paragraphs.
You may use the 'thought' key to reason before other keys, but do not wrap the other schema keys under thought, analysis, or any helper keys.
\end{lstlisting}

\noindent Instruction template.\par
\begin{lstlisting}[style=promptblock]
Step 2. Create phases.
Return exactly these top-level keys and no others:
thought, category, phases, dominant_risks.

Fill this JSON schema:
{
  "thought": "",
  "category": "",
  "phases": [
    {
      "phase": "",
      "work": "",
      "driver": ""
    }
  ],
  "dominant_risks": []
}

TASK_FEATURES:
{features from Step 1}
\end{lstlisting}

\subsection{Proof of Theorem~\ref{thm:greenclaw-optimal}}
\label{app:greenclaw-optimal-proof}

\begin{proof}
For each $k\in\{1,\ldots,K\}$, region $r\in\mathcal{R}$, and candidate-interval starting time $t\in\mathcal{H}$, define $D^{(k)}_{r,t}$ to be the minimum objective value for the first $k$ ordered execution intervals among all feasible partial placements whose $k$-th execution interval runs in region $r$ during the candidate interval starting at $t$.
If no such partial placement exists, set $D^{(k)}_{r,t}=+\infty$.
Throughout the proof, the minimum over an empty set is defined to be $+\infty$.

We first prove the recurrence by induction on $k$.
For $k=1$, a placement contains one execution interval and has no predecessor, so no cache-recomputation penalty can appear.
The only cost is the predicted carbon-emission term $\widehat{E}^{(1)}_{r,t}\widehat{CI}_{r,t}$, matching the base case in \Cref{sec:theory-dp}.
Now assume the claim holds for $k-1$.
Consider any feasible placement ending execution interval $k$ in region $r$ during the candidate interval starting at $t$.
The previous execution interval must use some region $q$ and some earlier candidate-interval starting time $s<t$.
By the induction hypothesis, the best such predecessor placement ending in region $q$ at time $s$ has value $D^{(k-1)}_{q,s}$.
Under \Cref{assump:local-cache-cost}, the transition from $q$ to $r$ adds no cache-recomputation cost if $q=r$ and adds $\Gamma^{(k)}_{r,t}$ if $q\ne r$.
Therefore, the best feasible predecessor has value
\begin{equation}
    \min_{\substack{q\in\mathcal{R},\ s\in\mathcal{H}:\ s<t}}
        \left\{D^{(k-1)}_{q,s}
        +\mathbf{1}\{q\ne r\}\Gamma^{(k)}_{r,t}\right\}.
\end{equation}
Adding the predicted carbon-emission term $\widehat{E}^{(k)}_{r,t}\widehat{CI}_{r,t}$ gives \Cref{eq:interval-dp-recurrence}.
Thus the induction claim holds for $k$.

The ordered problem formulation fixes the order of the $K$ execution intervals.
After $K$ layers, every feasible placement has exactly one final state $(r,t)$, and every finite final state corresponds to a feasible placement with strictly increasing candidate-interval starting times.
Therefore, minimizing $D^{(K)}_{r,t}$ over all final states gives the global optimum of \Cref{eq:ordered-interval-objective}.
The stored backpointers recover a placement plan attaining this optimum.

It remains to justify the stated complexity.
For a fixed layer $k\ge2$, define the prefix predecessor summaries
\begin{equation}
    P_q(t)=\min_{s\in\mathcal{H}:\ s<t}D^{(k-1)}_{q,s},
    \qquad q\in\mathcal{R}.
\end{equation}
For each region $q$, a left-to-right scan over candidate intervals updates $P_q(t)$ in $O(H)$ time, so all prefix summaries take $O(|\mathcal{R}|H)$ time.
Using these summaries, the predecessor term in \Cref{eq:interval-dp-recurrence} can be written as
\begin{equation}
    \min\left\{
        P_r(t),
        \Gamma^{(k)}_{r,t}+\min_{q\in\mathcal{R}:\ q\ne r}P_q(t)
    \right\}.
\end{equation}
For each candidate-interval starting time $t$, store the two smallest region--value pairs among $\{(q,P_q(t)):q\in\mathcal{R}\}$, with the pairs associated with distinct regions and ordered with multiplicity.
Thus, if two regions attain the same minimum value, both pairs may store that same value.
The quantity $\min_{q\ne r}P_q(t)$ is the first stored value when its associated region is not $r$, and otherwise it is the second stored value.
When $|\mathcal{R}|=1$, we use the convention $\min_{q\ne r}P_q(t)=+\infty$.
Hence every transition value in \Cref{eq:interval-dp-recurrence} can be evaluated in constant time after these summaries are available.
Filling one layer takes $O(|\mathcal{R}|H)$ time, and filling all $K$ layers takes $O(K|\mathcal{R}|H)$ time.
Keeping only the previous and current dynamic-programming layers, together with the prefix summaries used to evaluate the recurrence, requires $O(|\mathcal{R}|H)$ working memory.
If all backpointers are stored to recover the placement plan directly, they require an additional $O(K|\mathcal{R}|H)$ memory.
\end{proof}

\subsection{Proof of Theorem~\ref{thm:interval-deadline-guarantee}}
\label{app:interval-deadline-proof}

\begin{proof}
The returned schedule has finite dynamic-programming value, so it consists of $K$ ordered execution intervals placed in deadline-feasible candidate intervals.
By construction of the recurrence, each finite value in layer $k\ge2$ comes from a finite value in layer $k-1$ with predecessor starting time $s<t$.
Because every selected starting time belongs to
$\mathcal{H}=\{a+j\Delta t\mid j=0,1,\ldots,H-1\}$,
the recovered backpointer chain satisfies
\begin{equation}
    a\le t_1<t_2<\cdots<t_K\le a+(H-1)\Delta t.
\end{equation}
Moreover, distinct elements of $\mathcal{H}$ differ by at least $\Delta t$, so $t_{k+1}\ge t_k+\Delta t$ for every $k<K$.
Consequently, the selected candidate intervals are non-overlapping and appear in chronological order.
The final candidate interval ends at
\begin{equation}
    t_K+\Delta t
    \le a+H\Delta t
    \le d,
\end{equation}
where the last inequality follows from $H=\lfloor(d-a)/\Delta t\rfloor$.
Thus all reserved execution time lies within $[a,d]$.

The number of execution intervals is $K=\lceil\widetilde{T}/\Delta t\rceil$, so the total reserved execution capacity is
\begin{equation}
    K\Delta t
    =\left\lceil \frac{\widetilde{T}}{\Delta t}\right\rceil\Delta t
    \ge \widetilde{T}.
\end{equation}
If $T_{\mathrm{run}}\le\widetilde{T}$, then the selected intervals provide at least the task's actual accumulated active runtime, excluding only idle waiting between execution intervals.
Under the execution-controller model in \Cref{sec:design-execution-controller}, the agent executes during the selected intervals, may be paused at an interval boundary and resumed in a later selected interval, and stops once the task completes.
Therefore, the task can consume all of its required active runtime within the reserved intervals.
Since the final selected interval ends no later than $d$, the task completes by the deadline.
\end{proof}

\end{document}